\documentclass{article}

\usepackage{arxiv} 
\usepackage[utf8]{inputenc} 
\usepackage[OT1]{fontenc}    
\usepackage{natbib}
\usepackage{hyperref}       
\usepackage{url}            
\usepackage{booktabs}       
\usepackage{amsmath}
\usepackage{amssymb}
\usepackage{amsthm}
\usepackage{amsfonts}       
\usepackage{mathrsfs}
\usepackage{dsfont}
\usepackage{nicefrac}       
\usepackage{microtype}      
\usepackage{enumerate}
\usepackage{paralist}


\usepackage{color}
\definecolor{gray}{rgb}{.25, .25, .25}
\definecolor{aurometalsaurus}{rgb}{0.43, 0.5, 0.5}
\definecolor{britishracinggreen}{rgb}{0.0, 0.26, 0.15}
\definecolor{burntumber}{rgb}{0.54, 0.2, 0.14}
\definecolor{cobalt}{rgb}{0.0, 0.28, 0.67}
\definecolor{bulgarianrose}{rgb}{0.28, 0.02, 0.03}
\definecolor{ceruleanblue}{rgb}{0.16, 0.32, 0.75}

\title{Risk bounds when learning infinitely many response functions by ordinary linear regression}

\newtheorem{theorem}{Theorem}
\newtheorem{condition}{Condition}
\newtheorem{lemma}{Lemma}
\newtheorem{proposition}{Proposition}
\newtheorem{definition}{Definition}
\newtheorem{corollary}{Corollary}

\author{%
	Vincent Plassier \\
	CMAP, \'Ecole Polytechnique \\
	Institut Polytechnique de Paris \\
	Lagrange Mathematics and Computing Research Center \\
	75007 Paris, France\\
	\texttt{vincent.plassier@ens-paris-saclay.fr}
	\And
	Francois Portier \\
	LTCI, T\'el\'ecom Paris and CREST, ENSAI \\
	Institut polytechnique de Paris \\
	91120 Palaiseau, France \\
	\texttt{francois.portier@gmail.com}
	\And 
	Johan Segers \\
	LIDAM/ISBA, UCLouvain \\
	1348 Louvain-la-Neuve, Belgium \\
	\texttt{johan.segers@uclouvain.be}
}

\newcommand{\reals}{\mathbb{R}}
\newcommand{\Rd}{\reals^d}
\newcommand{\Rdd}{\reals^{d \times d}}
\newcommand{\N}{\mathbb{N}}
\newcommand{\ninf}[1]{\left\|{#1}\right\|_\infty}
\newcommand{\diff}{\mathrm{d}}

\newcommand{\eps}{\varepsilon}
\newcommand{\expec}{\operatorname{\mathbb{E}}}
\newcommand{\T}{\intercal}  
\newcommand{\argmin}{\operatornamewithlimits{\arg\min}}
\newcommand{\oh}{\operatorname{\mathrm{o}}}
\newcommand{\Oh}{\operatorname{\mathrm{O}}}
\newcommand{\pr}{\mathbb{P}}
\newcommand{\abs}[1]{\lvert{#1}\rvert}

\newcommand{\absbigg}[1]{\bigg\lvert{#1}\bigg\rvert}
\newcommand{\tr}{\operatorname{tr}}
\renewcommand{\emptyset}{\varnothing}
\newcommand{\supf}{\sup_{f \in \Fc}}

\newcommand{\Var}{\operatorname{var}}
\newcommand{\var}{\operatorname{var}}

\newcommand{\half}{1/2}  
  
\newcommand{\rme}{\mathrm{e}}
\newcommand{\op}{\oh_{\pr}}
\newcommand{\Op}{\Oh_{\pr}}

\newcommand{\prr}[1]{\left({#1}\right)}
\newcommand{\prn}[1]{(\textstyle{#1})}

\newcommand{\br}[1]{\left[{#1}\right]}

\newcommand{\ac}[1]{\left\{{#1}\right\}}
\newcommand{\acn}[1]{\{\textstyle{#1}\}}

\newcommand{\norm}[1]{\lVert{\textstyle{#1}}\rVert}

\newcommand{\Ac}{\mathscr{A}}
\newcommand{\Ec}{\mathscr{E}}
\newcommand{\Fc}{\mathscr{F}}
\newcommand{\Gc}{\mathscr{G}}

\newcommand{\Nc}{\operatorname{\mathscr{N}}}
\newcommand{\Sc}{\mathscr{S}}
\newcommand{\Xc}{\mathscr{X}}
\newcommand{\Yc}{\mathscr{Y}}
\newcommand{\scfunc}{s}


\begin{document}

\maketitle

\begin{abstract}
	Consider the problem of learning a large number of response functions simultaneously based on the same input variables. The training data consist of a single independent random sample of the input variables drawn from a common distribution together with the associated responses. The input variables are mapped into a high-dimensional linear space, called the feature space, and the response functions are modelled as linear functionals of the mapped features, with coefficients calibrated via ordinary least squares. We provide convergence guarantees on the worst-case excess prediction risk by controlling the convergence rate of the excess risk uniformly in the response function. The dimension of the feature map is allowed to tend to infinity with the sample size. The collection of response functions, although potentially infinite, is supposed to have a finite Vapnik--Chervonenkis dimension. The bound derived can be applied when building multiple surrogate models in a reasonable computing time.
\end{abstract}

\section{Introduction}

\paragraph{Context.}

When the outcome of interest is generated by a black box model which cannot be easily evaluated, a well-spread technique is to build a surrogate model allowing to reproduce the behavior of the true model while being computationally cheaper. This approach, known as \textit{response surface}, is popular in many fields of engineering such as \textit{reliability analysis} \citep{bucher1990fast}, \textit{aerospace science} \citep{forrester2009recent}, \textit{energy science} \citep{nguyen2014review}, or electromagnetic dosimetry \citep{azzi2019surrogate}, to name a few. In addition, the response surface methodology is useful in applied mathematics, for instance, in \textit{optimization} when the objective function is difficult to evaluate \citep{jones:2001} and in \textit{Monte Carlo integration}, where a surrogate function can be used to reduce the variance of the Monte Carlo estimate with the help of control variates \citep{portier+s:2019}. For a general presentation of the response surface methodology, we refer to \citet{myers2016response}.

\paragraph{Framework.}

The statistical framework of a response surface is as follows. Consider a real-valued function $f$ defined on a state space $\Xc$, that is, $f:\Xc \to \reals$. In many applications, the function $f$ is a black box and difficult to evaluate.  For instance, a request to $f$ might be obtained by running a heavy computer program. In such a situation, one can afford only a few requests to $f$, that is, one can obtain $f(X_1),\ldots, f(X_n)$, where $n\in \N$ and $X_1,\ldots, X_n$ is an independent sample of $\Xc$-valued random variables with distribution $P$, called the inputs or the covariates. Based on those evaluations, the goal is to build a \textit{surrogate model}, that is, an approximation of $f$ which is easier to calculate than $f$ itself.

Many different methods can be used to build a surrogate model. The simplest one consists in learning $f$ as a linear combination of the covariates by minimizing the sum of squared errors. A well spread extension is to fit a polynomial function instead of a linear one as presented in \citet{myers2016response} or \citet{konakli2016polynomial}.
Since building a response surface consists in the same task as regression, any regression method might be used in principle. Popular methods include moving least-squares \citep{breitkopf2005}, Gaussian processes \citep{frean2008using} or neural nets \citep{bauer2019estimation}. For surrogate models, the approximation method needs to be sufficiently flexible to fit the black box function $f$ well and simple enough to require only a small amount of computations. It is perhaps due to its connection to the regression framework that the problem of building surrogate models has received little specific attention in the statistical learning literature. Our purpose is to address the question of learning many surrogate models simultaneously from a single, random design.

\paragraph{Learning several models simultaneously.}

The fundamental question raised in this paper deals with the ability of building several, possibly infinitely many, surrogate models such that (a) they share the same quality and (b) they are constructed with the help of a single input sample. 
From a theoretical standpoint, it relates to the question of \textit{uniformity} over the tasks: can a broad family of models be learnt with a uniform level of accuracy?  
From a more practical point of view, by working with the same inputs to solve multiple tasks simultaneously, one benefits from a certain computational advantage, as explained below.

Consider a broad class $\Fc$ of black box models $f$. For each such model $f$, a least-squares estimate is obtained out of the linear span of the components of the $d$-dimensional feature map $h = (h_1,\ldots, h_d)^\T : \Xc \to \Rd$, where ${}^\T$ denotes matrix transposition. The feature map $h$ should be known and easy to evaluate. Define the ordinary least squares estimate
\[
	\hat{\beta}_f \in \argmin_{b\in \Rd}  \sum_{i=1}^n \{ f(X_i) -  h(X_i)^\T b \} ^2 .
\]
The surrogate model for $f$ is then defined as $x \mapsto \hat{f}(x) = h(x)^\T \hat{\beta}_f$.

This approach is known as \textit{series estimators} or simply as \textit{least squares estimators} \citep{hardle1990applied, gyorfi2006distribution}. It is quite general as several basis functions might be considered such as polynomials, indicators, spline functions or the Fourier basis. In the regression framework, series estimators have been studied for instance in \citet{newey1997convergence} and \citet{belloni2015some}. Series estimators are a convenient way to include shape constraints on $f$, as for instance when $f$ is partially linear. They also facilitate the computation of derivatives \citep{zhou2000derivative}. In this respect, series estimators can help to build \textit{easy-to-interpret} models, a desirable feature when one is in need of some knowledge about the effects of certain inputs on the black box model $f$.

One motivation for the use of series estimators is the computational advantage they provide when several models are to be learnt at the same time. When a large number $m$ of such surrogate models $f$ are built with different covariates, running $m$ least-squares algorithms, for instance using the Cholesky decomposition, requires $\Oh(m n d^2)$ operations \citep[Section~3.5]{friedman2001elements}, assuming that $d = \Oh(n)$. In our framework of a single training sample, however, the Cholesky decomposition needs to be done only once, and the time needed to compute $m$ least squares estimates is rather $\Oh( n d^2 + m n d)$. 

\paragraph{Uniform convergence rate in random design with increasing dimension.}

For a given model $f \in \Fc$, the error made by a surrogate model $x \mapsto h(x)^\T b$ with coefficient vector $b \in \Rd$ is measured through the $L^2(P)$-risk
\begin{equation*}
	L_f (b) 
	= \int_{\Xc} \{f(x) - h(x)^\T b\}^2 \, \diff P(x) 
	= \expec [ \{ f(X) -  h(X)^\T b \}^2], 
\end{equation*}
where the $\Xc$-valued random variable $X$ has distribution $P$. We place ourselves in the random design setting and study the risk $L_f(\hat{\beta}_f)$ of the least squares estimator $\hat{\beta}_f$. This risk is a random variable whose randomness stems from the one of the training sample $X_1, \ldots, X_n$. 

The main result of the paper concerns the \textit{excess risk} $L_f(\hat{\beta}_f) - \min_{b \in \Rd} L_f(b)$ when $n \to \infty$ and $d\to \infty$, uniformly over $f \in \Fc$. That is, we study the convergence rate to zero of the random variable $\supf \{ L_f(\hat{\beta}_f) -  \min_{b\in\Rd}  L_f(b) \}$. A key quantity is the \emph{leverage function} $q : \Xc \to [0, \infty)$ defined by
\[
	\forall x \in \Xc, \qquad
	q(x) = h(x)^\T  G^{-1}  h(x) ,
	\]
	where $G = \expec[h(X) h(X)^\T]$ is the $d \times d$ Gram matrix of the feature map. 
The leverage function is the population version of the \textit{statistical leverage} of a feature vector $h(X_i)$ in the linear regression model. It plays an important role when analyzing regression with random design \citep{hsu+etal:2014}. Note that $q$ does not change if the feature map is composed with an invertible linear transformation.
Let $\eps_f = f - h^\T \beta_f$ be the error function, with $\beta_f = \argmin_{b \in \Rd} L_f(b)$ the risk-minimizing coefficient vector. Our main result, expressed in Corollary~\ref{cor:simple-rate}, is that
\[
		\sup_{f \in \Fc} \left\{ L_f(\hat{\beta}_f) - \min_{b\in\Rd}  L_f(b) \right\}
		= \Op\left( \frac{\log n}{n} \supf \expec [ q(X)  \eps_{f}^2(X)] \right), \qquad n \to \infty.
\]
Apart from the fact that the obtained bound is invariant under invertible linear transformations of the feature map, this result is remarkable for the three following reasons. First, the dimension $d$ of the feature space is allowed to tend to infinity with the input sample size $n$ at a speed which depends on the leverage function $q$ via Condition~\ref{cond:qn}. Second, in case the class $\mathcal F$ contains only a single response function $f$, a simple analysis leads to a bound for the excess risk that scales as $  \expec [ q(X)  \eps_{f}^2(X)] / n $, see Eq.~\eqref{eq:gamma}, a bound that matches the one of our main result up to a logarithmic term. Third, the quantity $ \expec [ q(X)  \eps_{f}^2(X)] $ takes over the role of the quantity $\sigma_f^2 d$ in the fixed-design setting, where $\sigma_f^2$ is the variance of the error variable in the linear model.

The uniformity in $f \in \Fc$ is achieved by a decomposition of a quadratic form in terms of a sample mean and a U-statistic in combination with a concentration inequality for the suprema of such statistics. The analysis is focused on the ordinary least squares estimator, whereas the extension to ridge regression as in \citet{hsu+etal:2014} is left for further research.

\paragraph{Application to Monte Carlo integration with control variates.}

The past few years, Monte Carlo integration has received increasing interest because of its simplicity and its success facing complex high-dimensional approximation problems. The standard Monte Carlo error has a convergence rate of $1/\sqrt n $, independently of the dimension---see \cite{novak2016} for a review of deeper results around this point. Although a dimension-free convergence rate is comfortable, $1/\sqrt n $ is still relatively slow and some difficulties might arise in situations where we can only make a limited number of requests to the integrand. 
The use of control variates \citep{owen:13,glasserman:2013} has then represented an interesting avenue as it allows to reduce the variance of the standard Monte Carlo estimate without requiring additional evaluations of the integrand. 
Recently, it has been shown \citep{oates+g+c:2017,portier+s:2019} that control variates allow to accelerate the $1/\sqrt n $ convergence rate substantially.
However, whether this acceleration occurs when the error is measured uniformly over a class of integrand functions is, to the best of our knowledge, still unknown.
Motivated by several applications in which uniform results are needed (see Section~\ref{sec:MC} for details), we obtain, as a by-product of the bound in Corollary~\ref{cor:simple-rate}, a uniform convergence rate for control variate Monte Carlo estimates. The fact that the rate established is faster than the standard Monte Carlo rate furnishes an additional argument for the use of control variates in Monte Carlo methods.

\paragraph{Paper outline.}

The mathematical background is presented in Section~\ref{sec:set-up}. The main result and a sketch of its proof are presented in Sections~\ref{sec:thm} and~\ref{sec:proof}, respectively.
The application to control variate Monte Carlo methods is considered in Section ~\ref{sec:MC}. 
Detailed proofs are deferred to the appendices.

\section{Learning multiple response functions simultaneously}
\label{sec:set-up}

\paragraph{Linear model and ordinary least squares estimator.}

Consider a collection $\Fc$ of functions $f : \Xc \to \reals$ on a probability space $(\Xc, \Ac, P)$. We think of $f(x)$ as the real-valued response given an input $x \in \Xc$. Given an independent random sample $X_1, \ldots, X_n$ from $P$ together with the associated responses $f(X_1), \ldots, f(X_n)$ for every $f \in \Fc$, we wish to learn the values $f(x)$ of the response functions $f \in \Fc$ for new but yet unobserved inputs $x \in \Xc$. To this end, we map the input space $\Xc$ into a feature space $\Rd$ via a feature map $h : \Xc \to \Rd : x \mapsto h(x) = (h_1(x), \ldots, h_d(x))^\T$. One of the feature functions $h_j$ could be the constant function $1$, corresponding to an intercept. The response functions are modelled as linear functionals of the mapped features with coefficients estimated by ordinary least squares. The approximation to the response function $f \in \Fc$ is thus
\[
	\forall x \in \Xc, \qquad
	\hat{f}(x) = h(x)^\T \hat{\beta}_f 
	\quad \text{ where } \quad \hat{\beta}_f = \argmin_{b \in \Rd} \sum_{i=1}^n \{f(X_i) - h(X_i)^\T b\}^2.
\]
We wish to control the learning error $\hat{f} - f$ uniformly in $f \in \Fc$.

Sharing the same inputs and the same feature map across multiple response functions brings computational gains. Classical least-squares theory yields 
\[ 
	\forall f \in \Fc, \qquad
	\hat{\beta}_f 
	= G_n^{-1} \frac{1}{n} \sum_{i=1}^n h(X_i) f(X_i)
	\quad \text{ where } \quad G_n = \frac{1}{n} \sum_{i=1}^n h(X_i) h(X_i)^\T.
\]
In case the empirical Gram matrix $G_n$ is not invertible, a pseudo-inverse is used instead. It follows that the predicted responses are linear in the observed responses:
\[
	\forall f \in \Fc, x \in \Xc, \qquad 
	\hat{f}(x) = \frac{1}{n} \sum_{i=1}^n w(x, X_i) f(X_i)
	\quad \text{ where } \quad w(x, X_i) = h(x)^\T G_n^{-1} h(X_i).
\] 
The weights $w(x, X_i)$ do not depend on the response function $f$. This invariance is a computational advantage if multiple response functions $f \in \Fc$ are to be learned simultaneously. 

\paragraph{Worst-case excess prediction risk.}

The response functions are modelled as linear functionals of the mapped features via
\begin{equation}
\label{eq:linmod}
	\forall f \in \Fc, \forall x \in \Xc, \qquad
	f(x) = h(x)^{\T} \beta_f + \eps_f(x).
\end{equation}
The coefficient vector $\beta_f$ is defined as the minimizer over $b \in \Rd$ of the prediction risk
\[
	\forall f \in \Fc, \forall b \in \Rd, \qquad
	L_f(b) = \expec[\{f(X) - h(X)^\T b\}^2].
\]
Here, the expectation is taken with respect to a random feature $X$ with distribution $P$ and it is assumed that $f$ and $h$ have finite second moments. Let $G = \expec[h(X) h(X)^\T]$ denote the $d \times d$ Gram matrix of the feature map, assumed to be positive definite. Classical least squares theory yields
\[
	\forall f \in \Fc, \qquad
	\beta_f = \argmin_{b \in \Rd} L_f(b) = G^{-1} \expec[h(X) f(X)].
\]

Since $\eps_f(X) = f(X) - h(X)^\T \beta_f$ is orthogonal to $h(X)$, i.e., $\expec[h(X) \eps_f(X)] = 0$, the \textit{excess risk} associated to any other coefficient vector $b \in \Rd$ is
\[
	L_f(b) - L_f(\beta_f) = \expec[\{h(X)^\T(b - \beta_f)\}^2] = (b - \beta_f)^\T G (b - \beta_f).
\]

The optimal coefficient vector $\beta_f$ is unknown, so we estimate it by the least-squares estimator $\hat{\beta}_f$. The expected squared error made by the approximated response function for a new input distributed according to $P$ is
\begin{equation}
\label{eq:Lfdecomp}
	\int_{\Xc} \{\hat{f}(x) - f(x)\}^2 \, \diff P(x)
	=
	L_f(\hat{\beta}_f)
	=
	L_f(\beta_f) + (\hat{\beta}_f - \beta_f)^\T \, G \, (\hat{\beta}_f - \beta_f).
\end{equation}
The right-hand side of \eqref{eq:Lfdecomp} decomposes the expected squared error into two parts. 
\begin{itemize}
	\item The first term, $L_f(\beta_f)$, is deterministic. It represents the \emph{modelling error} stemming from the linear model in~\eqref{eq:linmod}. Given the model, this term is incompressible and does not depend on the learning algorithm nor on the training data. 
	\item The second term on the right-hand side in \eqref{eq:Lfdecomp} is random. It represents the \emph{learning error} due to the estimation step and the randomness of the training data. 
\end{itemize}

It is on the learning error that we focus our analysis, with the particularity that we consider the error uniformly in the response function. Our object of interest is thus the \textit{worst-case excess prediction risk}
\begin{equation}
\label{eq:Lfworst}
	\sup_{f \in \Fc} \{L_f(\hat{\beta}_f) - L_f(\beta_f)\}
	=
	\sup_{f \in \Fc} (\hat{\beta}_f - \beta_f)^\T \, G \, (\hat{\beta}_f - \beta_f).
\end{equation}
We are interested in the rate at which this supremum tends to zero as the sample size $n$ and the dimension $d$ of the feature map tend to infinity. 

It is to be emphasized that our setting is that of a random design. The excess prediction risk $L_f(\hat{\beta}_f) - L_f(\beta_f)$ is a nonnegative random variable that is constructed out of the random training sample $X_1, \ldots, X_n$. As is clear from \eqref{eq:Lfdecomp}, it incorporates the risk associated to a new and yet unobserved input $x \in \Xc$, averaged over $P$. The expression in \eqref{eq:Lfworst} is thus a supremum over potentially infinitely many random variables, each variable being built on the same inputs.

\paragraph{Excess prediction risk of a single response.}

Fix a response function $f \in \Fc$. Let $P_n$ denote the empirical distribution of the training sample $X_1, \ldots, X_n$, assigning probability $1/n$ to each observed input. For a real-valued, vector-valued or matrix-valued function $g$ on $\Xc$, expectations with respect to the unknown sampling distribution $P$ and the empirical distribution $P_n$ are denoted respectively by
\begin{align*}
	P(g) &= \expec[g(X)] = \int_{\Xc} g(x) \, \diff P(x), &
	P_n(g) &= \frac{1}{n} \sum_{i=1}^n g(X_i).
\end{align*}
Both operators are linear: for instance, $P(Ag) = A P(g)$ and $P_n(A g) = A P_n(g)$ if $A$ is a linear map between Euclidean spaces of suitable dimension. Using this operator notation, we get
\begin{align*}
	G &= P(h h^\T), & \beta_f &= G^{-1} P(h f), \\
	G_n &= P_n(h h^\T), & \hat{\beta}_f &= G_n^{-1} P_n(h f).
\end{align*}
Since $f = h^\T \beta_f + \eps_f$, the estimated coefficient vector is
\[
	\hat{\beta}_f 
	= G_n^{-1} P_n[h (h^\T \beta_f + \eps_f)] 
	= \beta_f + G_n^{-1} P_n(h \eps_f).
\]
The excess prediction risk is thus
\begin{equation}
\label{eq:Lfrepres0}
	L_f(\hat{\beta}_f) - L_f(\beta_f)
	=
	P_n(h \eps_f)^\T G_n^{-1} G G_n^{-1} P_n(h \eps_f).
\end{equation}

The predicted response functions $\hat{f}$ are linear combinations of the components $h_1, \ldots, h_d$ of the feature map $h$. They only depend on the feature map $h$ through the linear span of the functions $h_1, \ldots, h_d$. Therefore, the predicted responses remain unchanged if we compose the feature map with an invertible linear transformation $A$ of $\Rd$. Let $G^{\half}$ be the unique symmetric square root matrix of $G$ and let $G^{-\half}$ be its inverse. The whitened feature map is $\hbar = G^{-\half} h : \Xc \to \Rd$. If the random input $X$ has distribution $P$, then $\expec[ \hbar(X) \hbar(X)^\T ] = P(\hbar \hbar^\T) = I_d$, the $d \times d$ identity matrix. The empirical Gram matrix of the whitened feature map is
\[
	P_n(\hbar \hbar^\T) = G^{-\half} P_n(h h^\T) G^{-\half} = G^{-\half} G_n G^{-\half}.
\]
Since $h = G^{\half} \hbar$ and $P_n(\hbar \hbar^\T)^{-1} = G^{\half} G_n^{-1} G^{\half}$, the excess prediction risk in \eqref{eq:Lfrepres0} becomes
\begin{equation}
\label{eq:Lfrepres}
	L_f(\hat{\beta}_f) - L_f(\beta_f)
	=
	\abs{ P_n(\hbar \hbar^\T)^{-1} P_n(\hbar \eps_f) }_2^2,
\end{equation}
where $\abs{y}_2 = (y^\T y)^{\half}$ denotes the Euclidean norm of a vector $y \in \Rd$. 

Since $P(\hbar \hbar^\T) = I_d$, it is reasonable to expect that $P_n(\hbar \hbar^\T)^{-1}$ is approximately equal to $I_d$, at least if $n$ is large and $d$ is not too large compared to $n$; see Lemma~\ref{lem:Gninv} below for a precise statement. In that case, the excess prediction risk in \eqref{eq:Lfrepres} is approximately equal to $\abs{ P_n(\hbar \eps_f) }_2^2$. The expectation of the latter random variable can be easily calculated: since $P(\hbar \eps_f) = 0$, the terms with $i \ne j$ in the double sum below vanish and we find
\begin{equation}
\label{eq:gamma}
	\expec\left[ \abs{ P_n(\hbar \eps_f) }_2^2 \right]
	= \frac{1}{n^2} \sum_{i=1}^n \sum_{j=1}^n \expec[\eps_f(X_i) \hbar(X_i)^\T \hbar(X_j) \eps_f(X_j)]
	= \frac{1}{n} P(\hbar^\T \hbar \eps_f^2).
\end{equation}
The excess prediction risk for a single response function $f$ can thus be expected to have an order of magnitude equal to $n^{-1} P(\hbar^\T \hbar \eps_f^{2})$. In comparison, in the fixed-design case, where the training sample is considered as non-random, the expected excess risk is equal to $\sigma_f^2 d / n$, where $\sigma_f^2$ is the error variance \citep{hsu+etal:2014}. Eq.~\eqref{eq:gamma} motivates why in Theorem~\ref{thm:main}, the convergence rate for the worst-case prediction risk over the whole response family $\Fc$ involves the quantity $\supf P(\hbar^\T \hbar \eps_f^2)$.

\section{Convergence rate of the worst-case excess prediction risk}
\label{sec:thm}


\subsection{Notation}

Consider an asymptotic setting where the size $n$ of the training sample tends to infinity. The feature map may change with $n$: with a slight change of notation, we write henceforth 
\[ 
	h_n = (h_{n,1},\ldots,h_{n,d_n})^\T : \Xc \to \reals^{d_n}. 
\]
The feature dimension $d_n \ge 1$ depends on $n$ and may tend to infinity. The whitened feature map is 
\[
	\hbar_n = P(h_n h_n^\T)^{-\half} h_n : \Xc \to \reals^{d_n},
\]
where the $d_n \times d_n$ Gram matrix $P(h_n h_n^\T)$ is supposed to be invertible; otherwise, we can omit some components $h_{n,j}$ without affecting the linear span of the component functions. The least squares coefficient vectors and modelling errors depend on $n$ as well: we write
\[
	\forall f \in \Fc, \forall x \in \Xc, \qquad
	f(x) = h_n(x)^\T \beta_{n,f} + \eps_{n,f}(x)
	\quad \text{ where } \quad \beta_{n,f} = P(h_n h_n^\T)^{-1} P(h_n f).
\]
The least squares estimator of $\beta_{n,f}$ is $\hat{\beta}_{n,f} = P_n(h_n h_n^\T)^{-1} P_n(h_n f)$.

In this setting, the \emph{leverage function} $q_n : \Xc \to [0, \infty)$ is defined by
\[
	\forall x \in \Xc, \qquad
	q_n(x) 
	= h_n(x)^\T P(h_n h_n^\T)^{-1} h_n(x) 
	= \hbar_n(x)^\T \hbar_n(x)
	= \abs{ \hbar_n(x) }_2^2.
\]
The name of $q_n$ is derived from the notion of leverage of a design point in multiple linear regression. Note that $q_n$ does not change if the feature map is composed with an invertible linear transformation. We always have $P(q_n) = \tr[ P(\hbar_n \hbar_n^\T) ] = d_n$, where $\tr(A)$ denotes the trace of a square matrix $A$. 

Let $\pr$ denote the probability measure on the probability space on which the random inputs $X_1, \ldots, X_n$, taking values in $\Xc$, are defined. For any sequence $(Y_n)_n$ of real-valued random variables on that space and for any positive sequence $(a_n)_n$, the expression $Y_n = \Op(a_n)$ as $n \to \infty$ signifies that $Y_n / a_n$ is bounded in probability, that is, for every $\epsilon > 0$ there exists $K > 0$ such that $\limsup_{n \to \infty} \pr(|Y_n| > a_n K) \le \epsilon$. Similarly, the expression $Y_n = \op(a_n)$ as $n \to \infty$ signifies that $Y_n / a_n$ converges to zero in probability, that is, $\lim_{n \to \infty} \pr(\abs{Y_n} > a_n \epsilon) = 0$ for every $\epsilon > 0$. Our aim is to determine a positive sequence $a_n$ such that $a_n \to 0$ and, under reasonable assumptions,
\[
	\sup_{f \in \Fc} \left\{ L_f(\hat{\beta}_{n,f}) - L_f(\beta_{n,f}) \right\} = \Op(a_n), \qquad n \to \infty.
\]
Lastly, the supremum norm of a function $g : \Xc \to \reals$ is denoted by $\ninf{g} = \sup_{x \in \Xc} \abs{g(x)}$.

\subsection{Conditions} 

The conditions under which the main result holds concern the leverage function $q_n$ and the response functions $\Fc$.

\begin{condition}
	\label{cond:qn}
	One of the two alternative conditions hold:
	\begin{enumerate}[(a)]
		\item $P(q_n^2) = \oh(n)$ and $\log(\ninf{q_n}) = \Oh(\log n)$ as $n \to \infty$;
		\item $\ninf{q_n} \log(2 d_n) = \oh(n)$ as $n \to \infty$.
	\end{enumerate}
\end{condition}

Since $P(q_n) = d_n$ and $P(q_n^2) \ge [P(q_n)]^2$, condition (a) implies that $d_n = \oh(n^{\half})$ as $n \to \infty$. As $P(q_n^2) \le \ninf{q_n} P(q_n) = \ninf{q_n} d_n$, a sufficient condition for (a) moreover is that $\ninf{q_n} = \oh(n/d_n)$, which is the leverage condition in \citet{portier+s:2019}. Here, we have just assumed that the speed at which $\ninf{q_n}$ tends to infinity is at most polynomial in $n$. Condition (b) implies that $d_n \log(2 d_n) = \oh(n)$ as $n \to \infty$ but, compared to (a), imposes a stronger condition on $\ninf{q_n}$.

\begin{condition}
	\label{cond:F}
	The collection $\Fc$ of response functions admits a uniformly bounded envelope $F : \Xc \to [0, \infty)$, i.e., $\abs{f(x)} \le F(x)$ for any $f \in \Fc$ and any $x \in \Xc$, and $\ninf{F}$ is finite. In addition, $\Fc$ is supposed to be at most countably infinite.
\end{condition}

As $\ninf{q_n}$ and $\ninf{F}$ are finite, the collection of error functions $\{ \eps_{n,f} : f \in \Fc \}$ is uniformly bounded, see Lemma~\ref{lemma:maj M_n} in Appendix~\ref{supp:lemmas}.

The assumption in Condition~\ref{cond:F} that $\Fc$ is countable assures that suprema over random variables indexed by $f \in \Fc$ are measurable. Otherwise, probabilities involving such suprema would need to be replaced by outer probabilities \citep[Part~1]{vdvaart+w:1996}. In practice, the countability assumption is harmless insofar as $\Fc$ can usually be approximated by a countable dense subfamily anyway without affecting the value of the supremum \citep[Section~2.3.3]{vdvaart+w:1996}.

Covering numbers capture the complexity of a subset of a metric space and play a central role in a number of areas in information theory and statistics, including nonparametric function estimation, density estimation, empirical processes, and machine learning.

\begin{definition}[Covering number]
	For a subset $\Fc$ of a metric space $(\Yc, \rho),$ the $\eta$-covering number $\Nc(\Fc, \rho, \eta)$ is the smallest number of open $\rho$-balls of radius $\eta > 0$ required to cover $\Fc$, i.e.,
	\begin{align*}
		\Nc(\Fc, \rho, \eta) = \min\left\{p \ge 1: \ \exists f_1,\ldots,f_p \in \Yc, \; \Fc \subset \bigcup_{i=1}^p B_{\rho}(f_i,\eta)\right\},
	\end{align*}
	where $B_{\rho}(f,\eta) = \left\{ g \in \Yc : \rho(g, f) < \eta \right\}$ for $f \in \Yc$ and $\eta > 0$.
\end{definition}

For the definition of Vapnik--Chervonenkis (VC) classes, we follow \citet{gine+g:1999}.

\begin{definition}[VC-class]\label{def:VC}
	A class $\Fc$ of real functions on a measurable space $(\Xc, \Ac)$ is called a VC-class of parameters $(v, A) \in (0, \infty) \times [1, \infty)$ with respect to the envelope $F$ if for any $0 < \eta < 1$ and any probability measure $Q$ on $(\Xc, \Ac)$, we have
	\begin{equation*}
	\Nc\left(\Fc, L^2(Q), \eta \|F\|_{L^2(Q)}\right) \le (A/\eta)^{v}.
	\end{equation*}
\end{definition}

In Definition~\ref{def:VC}, we view $\Fc$ as a subset of the metric space $L^2(Q) \equiv L^2(\Xc, \Ac, Q)$ of $Q$-square-integrable functions $f : \Xc \to \reals$ equipped with the metric $\rho(f, g) = \norm{f - g}_{L^2(Q)}$, where $\norm{h}_{L^2(Q)} = [Q(h^2)]^{\half}$ for measurable $h : \Xc \to \reals$.

\begin{condition}
	\label{cond:VC}
	With respect to the envelope $F$, the collection $\Fc$ is VC with parameters $(v, A)$.
\end{condition}

\subsection{Main result}

The maximal error is
\begin{equation}
\label{eq:Mn}
	M_n = \supf \ninf{\eps_{n,f}}
\end{equation}
and the growth rate of the worst-case excess prediction risk will be expressed in terms of
\begin{equation}
\label{eq:Ln}
	\gamma_n^2 = \supf P(q_n \eps_{n,f}^2)
	\qquad \text{and} \qquad
	L_n^2 = M_n^2 \ninf{q_n}.
\end{equation}
Clearly, $\gamma_n^2 \le L_n^2$. For $a, b \in \reals$, write $a \vee b = \max(a, b)$ and $a \wedge b = \min(a, b)$. The positive part of $a \in \reals$ is $(a)_+ = a \vee 0$.

\begin{theorem}[Convergence rate of worst-case excess prediction risk]
	\label{thm:main}
	If Conditions~\ref{cond:qn}, \ref{cond:F}, and~\ref{cond:VC} hold, then, as $n \to \infty$,
	\[
		\supf \left\{L_f(\hat{\beta}_f) - L_f(\beta_f)\right\}
		= \Op \left( \left\{ \gamma_n^2 \vee \left(\tau_n  r_n^{1/2}\right) \right\} r_n \right),
		\qquad n \to \infty,
	\]
	where
	\begin{align*}
		\tau_n &= L_n^2 \left(1 \vee (a_n/M_n)\right) \qquad \text{and} \\
		r_n &= 1 \wedge \left[ \frac{\log n}{n} \left\{
			1 + \left( \frac{(\log M_n^{-1})_+}{\log n} \right)^{3/2} 
			\right\} \right]
	\end{align*}
	and where $a_n$ is defined in \eqref{eq:an} and is of the order $\oh\prn{\exp\acn{-n^{2/3}}}$ as $n \to \infty$.
\end{theorem}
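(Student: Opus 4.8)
The plan is to start from the whitened representation \eqref{eq:Lfrepres}, which writes the excess risk as $\abs{P_n(\hbar_n \hbar_n^\T)^{-1} P_n(\hbar_n \eps_{n,f})}_2^2$, and to decouple its two factors. First I would invoke the control of the empirical Gram matrix of the whitened features (Lemma~\ref{lem:Gninv}) to show that, under Condition~\ref{cond:qn}, the smallest eigenvalue of $P_n(\hbar_n \hbar_n^\T)$ is bounded away from zero with probability tending to one. On that high-probability event the operator norm of the inverse $P_n(\hbar_n \hbar_n^\T)^{-1}$ is at most a constant, so that
\[
	\supf \abs{P_n(\hbar_n \hbar_n^\T)^{-1} P_n(\hbar_n \eps_{n,f})}_2^2 \le C \, \supf \abs{P_n(\hbar_n \eps_{n,f})}_2^2 .
\]
This reduces the whole problem to bounding the supremum over $f \in \Fc$ of the quadratic statistic $\abs{P_n(\hbar_n \eps_{n,f})}_2^2$.

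The second step is the algebraic decomposition announced in the introduction. Expanding the squared norm and separating the diagonal from the off-diagonal terms gives
\[
	\abs{P_n(\hbar_n \eps_{n,f})}_2^2 = \frac{1}{n} P_n(q_n \eps_{n,f}^2) + \frac{1}{n^2}\sum_{i \ne j} \hbar_n(X_i)^\T \hbar_n(X_j)\, \eps_{n,f}(X_i)\,\eps_{n,f}(X_j).
\]
The first piece is a genuine empirical mean scaled by $1/n$, with population analogue $\gamma_n^2 / n$ by \eqref{eq:gamma} and~\eqref{eq:Ln}. Using the VC property (Condition~\ref{cond:VC}) together with the uniform boundedness of $\{\eps_{n,f} : f \in \Fc\}$ supplied by Condition~\ref{cond:F} and Lemma~\ref{lemma:maj M_n}, I would apply a maximal inequality for empirical processes indexed by the class $\{q_n \eps_{n,f}^2 : f \in \Fc\}$ to obtain $\supf P_n(q_n \eps_{n,f}^2) \le \gamma_n^2$ plus a fluctuation of smaller order, the fluctuation producing the logarithmic factor inside $r_n$. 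This accounts for the leading $\gamma_n^2 r_n$ contribution to the stated bound.

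The second piece is a degenerate U-statistic: since $P(\hbar_n \eps_{n,f}) = 0$, the kernel $k_f(x, y) = \hbar_n(x)^\T \hbar_n(y)\,\eps_{n,f}(x)\,\eps_{n,f}(y)$ satisfies $\int k_f(x, \cdot)\, \diff P(x) = 0$, so the off-diagonal sum is a canonical U-process indexed by $f \in \Fc$. Controlling its supremum is the heart of the argument: I would combine a decoupling inequality with a moment and tail bound for canonical U-processes (in the spirit of the work of de la Pe\~{n}a and Gin\'{e} and of Arcones and Gin\'{e}), the complexity of the index class again being controlled through the covering numbers bounded by Condition~\ref{cond:VC}. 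The scale of this bound enters through $\ninf{q_n}$ and $M_n$ via $L_n^2 = M_n^2 \ninf{q_n}$, which is exactly why the term $\tau_n r_n^{1/2}$ appears; the negligible correction $a_n$, of order $\oh(\exp\{-n^{2/3}\})$, arises from a truncation step handling the rare event that a single feature vector has an atypically large norm. Combining the two pieces and taking the maximum of their orders yields the claimed rate.

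I expect the main obstacle to be the uniform control of the canonical U-process in the third step. Unlike the diagonal empirical-process term, which yields to standard VC maximal inequalities, the off-diagonal U-process requires sharp exponential inequalities for suprema of degenerate U-statistics with a quadratic-form kernel whose size grows with the dimension through $\ninf{q_n}$; obtaining the dependence on $M_n$, $\ninf{q_n}$, and the VC parameters sharply enough to produce the delicate $r_n^{1/2}$ factor and the $(\log M_n^{-1})_+/\log n$ correction inside $r_n$ is where the technical difficulty concentrates.
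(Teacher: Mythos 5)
Your overall architecture coincides with the paper's: reduction via Lemma~\ref{lem:Gninv} and \eqref{eq:worstLfbound}, the diagonal/off-diagonal split \eqref{eq:n2Pndecomp}, an empirical-process maximal inequality for the diagonal term (the paper uses Talagrand's Corollary~3.4 together with Proposition~2.1 of Gin\'e and Guillou), and a concentration inequality for degenerate U-processes for the off-diagonal term (the paper uses Major's inequality, quoted as Theorem~\ref{prop:U}). However, there is a genuine gap: the off-diagonal step, which you explicitly defer (``where the technical difficulty concentrates''), is not a routine application of de la Pe\~na--Gin\'e decoupling or Arcones--Gin\'e bounds; it is the step that \emph{generates} the specific form of the rate. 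Applying Major's theorem requires (i) the covering-number control of $\Gc_n$ in \eqref{eq:Gn} supplied by Proposition~\ref{prop:cover}, and (ii) a verification that the validity window $[y_{n,-},y_{n,+}]$ of the inequality is non-empty and that the tail level $C D_n \rme^{-\alpha y}$ can be driven below any $\delta$. Point (ii) forces the introduction of the variance floor $b_n$ in \eqref{eq:bn} and of the sequence $a_n$ in \eqref{eq:an}. Your explanation of $a_n$ is incorrect: it is not a truncation handling feature vectors with atypically large norm, but a cap ensuring, via \eqref{eq:Dn*Dn}, that the $L^2$-density parameter $D_n$ of the rescaled kernel class never exceeds $D_n^*$ --- an issue that arises exactly when $M_n \to 0$ faster than any polynomial, since then $A_n = 8A\ninf{F}\ninf{q_n}^{1/2}/M_n$ blows up. This mechanism is precisely what produces the factor $1 \vee (a_n/M_n)$ in $\tau_n$ and the $\bigl((\log M_n^{-1})_+/\log n\bigr)^{3/2}$ correction inside $r_n$; a plan organized around a feature-norm truncation would not reproduce them.

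A secondary but telling bookkeeping error: you attribute the $\gamma_n^2 r_n$ branch of the rate to the diagonal term and the $\tau_n r_n^{3/2}$ branch to the U-process. In the paper, \emph{both} branches come from the U-statistic bound, since Major's inequality is applied with the second-moment proxy $\nu_n = (\gamma_n^2/\tilde{\tau}_n) \vee b_n$ of \eqref{eq:nun}, yielding $n(\gamma_n^2 \vee \tilde{\tau}_n b_n)u_n^{3/2}\log n$; the diagonal term contributes only $\gamma_n^2/n$ plus a fluctuation of order $L_n \tau_n \sqrt{n}\log(B_n)/n^2$, both shown in Step~4 to be dominated. Your version, in which the diagonal fluctuation is simultaneously ``of smaller order'' than $\gamma_n^2$ yet produces the extra $\log n$ factor, is internally inconsistent: if the fluctuation is lower order, the diagonal gives $\gamma_n^2/n$, not $\gamma_n^2 \log n/n$; the logarithm is the price of uniformity paid inside the U-process concentration. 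The final rate would still assemble correctly if the U-process bound were actually proved in its max form, but as written the proposal leaves the theorem's crux unestablished and misidentifies the origin of its two delicate ingredients.
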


When all the functions $f\in\Fc$ are in the vector space $\acn{\beta^{\T}h_n: \beta\in\reals^{d_n}}$, the maximal error becomes $M_n=0$ and therefore we find that the worst-case excess prediction risk is equal to zero.
The quantity $\gamma_n^2$ is related to the $L^2$-norm of the functions $q_n \eps_{n,f}^2$ while $L_n^2$ is related to their supremum norm. It is reasonable to hope that the latter will not be too large in comparison to the former. This motivates the assumptions in the next corollary.

\begin{corollary}[Simplified rates]
	\label{cor:simple-rate}
	If $(\log M_n^{-1})_+ = \Oh(\log n)$ as $n \to \infty$, i.e., if there is some $\alpha > 0$ such that $\liminf_{n \to \infty} n^\alpha M_n > 0$, then
	\begin{equation}
	\label{eq:cor:simpler-rate}
		\supf \left\{ L_f(\hat{\beta}_f) - L_f(\beta_f) \right\}
		= \Op \left(
			\left\{
				\gamma_n^2 \vee \left( L_n^2 \sqrt{\frac{\log n}{n}} \right)
			\right\} \frac{\log n}{n}
		\right), \qquad n \to \infty.
	\end{equation}
	If, moreover, $L_n^2 = \Oh \left( \gamma_n^2 \sqrt{n / \log n} \right)$, then
	\begin{equation}
	\label{eq:cor:simple-rate}
		\supf \left\{ L_f(\hat{\beta}_f) - L_f(\beta_f) \right\}
		= \Oh \left( \gamma_n^2 \frac{\log n}{n} \right),
		\qquad n \to \infty.
	\end{equation}
\end{corollary}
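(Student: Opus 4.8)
The plan is to obtain both displays directly from Theorem~\ref{thm:main} by inserting the hypotheses into the sequences $r_n$ and $\tau_n$ and then simplifying the product $\{\gamma_n^2 \vee (\tau_n r_n^{1/2})\}\, r_n$. No fresh probabilistic input is required: everything reduces to deterministic asymptotic bookkeeping, and the random variable on the left stays $\Op$ of whatever deterministic sequence we produce, because we only ever rescale the rate of Theorem~\ref{thm:main} by bounded factors.

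First I would control $r_n$. Under the hypothesis $(\log M_n^{-1})_+ = \Oh(\log n)$, the ratio $(\log M_n^{-1})_+ / \log n$ is bounded, hence so is its $3/2$-power, and the bracketed factor $\{1 + ((\log M_n^{-1})_+/\log n)^{3/2}\}$ lies between $1$ and some constant for all large $n$. Since $(\log n)/n \to 0$, the minimum with $1$ in the definition of $r_n$ is eventually inactive, so $r_n \asymp (\log n)/n$; in particular $r_n = \Oh((\log n)/n)$ and $r_n^{1/2} = \Oh(\sqrt{(\log n)/n})$.

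Next I would simplify $\tau_n = L_n^2(1 \vee (a_n/M_n))$. The equivalent form $\liminf_n n^\alpha M_n > 0$ of the hypothesis gives $M_n^{-1} = \Oh(n^\alpha)$, i.e.\ $M_n^{-1}$ grows at most polynomially, whereas $a_n = \oh(\exp\{-n^{2/3}\})$ decays faster than any polynomial; therefore $a_n/M_n \to 0$, so $1 \vee (a_n/M_n) = 1$ for all large $n$ and $\tau_n = L_n^2$. Substituting both reductions, $\tau_n r_n^{1/2} = L_n^2 r_n^{1/2} = \Oh(L_n^2 \sqrt{(\log n)/n})$, and pulling the constant through the maximum yields $\gamma_n^2 \vee (\tau_n r_n^{1/2}) = \Oh(\gamma_n^2 \vee (L_n^2\sqrt{(\log n)/n}))$. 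Multiplying by $r_n = \Oh((\log n)/n)$ gives \eqref{eq:cor:simpler-rate}.

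For the sharper display, the extra assumption $L_n^2 = \Oh(\gamma_n^2\sqrt{n/\log n})$ turns $L_n^2\sqrt{(\log n)/n}$ into $\Oh(\gamma_n^2)$, so the maximum $\gamma_n^2 \vee (L_n^2\sqrt{(\log n)/n})$ collapses to $\Oh(\gamma_n^2)$, and multiplying by $(\log n)/n$ delivers \eqref{eq:cor:simple-rate}. I expect no genuine obstacle here: the only points demanding a little care are verifying that the truncation at $1$ in $r_n$ is asymptotically irrelevant and that positive constants may be moved through the $\vee$ without changing the order, both of which are elementary.
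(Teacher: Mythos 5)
Your proof is correct and follows essentially the same route as the paper's own argument: the paper likewise deduces $\tau_n = L_n^2$ from $M_n > a_n$ for large $n$ (which is exactly your polynomial-versus-superexponential comparison) and replaces $r_n$ by $(\log n)/n$ before simplifying the maximum. Your version merely spells out the bookkeeping that the paper leaves implicit.
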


If, on the other hand, the sequence with general term $(\log M_n^{-1})_+$ is of larger order than $\log n$, then for every function $f\in\Fc$, the sequence $(\ninf{\eps_{n,f}})_{n\in\N}$ converges very quickly to zero. This corresponds to the ``ideal case'' where the family $\Fc$ is well approximated by the chosen regressors $(h_{n,1},\ldots,h_{n,d_n})$.
In this case, $(M_n)$ converges to zero faster than $(n^{-\alpha})$ for any $\alpha > 0$ and the general form of the rate in Theorem~\ref{thm:main} implies that the worst-case excess prediction risk converges to zero very fast.

Apart from the factor $\log n$, the convergence rate in \eqref{eq:cor:simple-rate} corresponds to the one for a single response function $f$ as discussed in the paragraph around Eq.~\eqref{eq:gamma}. The additional factor $\log n$ stems from a concentration inequality for U-statistics in combination with bounds on the covering numbers of function classes derived from $\Fc$.

\section{Sketch of proof of Theorem~\ref{thm:main}}
\label{sec:proof}


Let $\lambda_{\min}(A)$ and $\lambda_{\max}(A)$ denote the smallest and largest eigenvalue, respectively, of the symmetric matrix $A$. Consider the matrix norm $\abs{A}_2 = \sup \{ \abs{A y}_2 : y \in \Rd, \abs{y}_2 \le 1 \}$ for $A \in \Rdd$. If $A$ is symmetric and positive semi-definite, then $\abs{A}_2 = \lambda_{\max}(A)$. If, moreover, $A$ is positive definite, then $\lambda_{\max}(A^{-1}) = \{\lambda_{\min}(A)\}^{-1}$. In view of \eqref{eq:Lfrepres}, the worst-case excess prediction risk is bounded by
\begin{equation}
\label{eq:worstLfbound}
	\supf \left\{ L_{f}(\hat{\beta}_{f})-L_{f}(\beta_{f}) \right\}
	\le \left\{ \lambda_{\min}\bigl( P_{n}(\hbar_n \hbar_n^{\T}) \bigr) \right\}^{-2}
	\cdot \supf\left|P_{n}(\hbar_n \eps_{n,f})\right|_{2}^{2}.
\end{equation}
Under reasonable conditions permitting $d_n \to \infty$, the smallest eigenvalue of $P_{n}(\hbar_n \hbar_n^{\T})$ remains bounded away from zero with high probability.

\begin{lemma}
\label{lem:Gninv}
Suppose one of the following two conditions holds:
\begin{compactenum}[(a)]
\item $P(q_n^2) = \oh(n)$ as $n \to \infty$;
\item $\ninf{q_n} \log(2 d_n) = \oh(n)$ as $n \to \infty$.
\end{compactenum}
Then $P_{n}(\hbar_{n} \hbar_{n}^\T)$ is invertible with probability tending to one and $\lambda_{\min} \{ P_{n}(\hbar_n \hbar_n^{\T}) \} \ge 1 + \op(1)$ as $n \to \infty$.
\end{lemma}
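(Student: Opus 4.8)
The plan is to reduce the claim to a single spectral-norm deviation bound for the empirical whitened Gram matrix and then to establish that bound with two different concentration tools, one for each hypothesis. Write $\Sigma_n = P_n(\hbar_n \hbar_n^\T) = \frac{1}{n} \sum_{i=1}^n \hbar_n(X_i) \hbar_n(X_i)^\T$ and recall that $P(\hbar_n \hbar_n^\T) = I_{d_n}$. Since $\lambda_{\min}(\Sigma_n) \ge 1 - \abs{\Sigma_n - I_{d_n}}_2$, where $\abs{\cdot}_2$ is the operator norm, it suffices to show $\abs{\Sigma_n - I_{d_n}}_2 = \op(1)$ as $n \to \infty$; this bound also forces $\Sigma_n$ to be invertible with probability tending to one. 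The deviation $\Sigma_n - I_{d_n} = \frac{1}{n} \sum_{i=1}^n \{ \hbar_n(X_i) \hbar_n(X_i)^\T - I_{d_n} \}$ is a normalized sum of independent, identically distributed, centered random matrices whose size is governed by the leverage function through the almost-sure identity $\abs{\hbar_n(x) \hbar_n(x)^\T}_2 = \hbar_n(x)^\T \hbar_n(x) = q_n(x)$.

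Under hypothesis (a), I would use a plain second-moment argument. Bounding the operator norm by the Frobenius norm, $\abs{\Sigma_n - I_{d_n}}_2 \le \norm{\Sigma_n - I_{d_n}}_F$, and exploiting that the summands are i.i.d.\ and centered, one obtains $\expec [ \norm{\Sigma_n - I_{d_n}}_F^2 ] = \frac{1}{n} \expec [ \norm{\hbar_n(X) \hbar_n(X)^\T - I_{d_n}}_F^2 ]$. A short trace computation, using $(\hbar_n \hbar_n^\T)^2 = q_n \, \hbar_n \hbar_n^\T$ and $P(q_n) = d_n$, gives $\expec [ \norm{\hbar_n(X) \hbar_n(X)^\T - I_{d_n}}_F^2 ] = P(q_n^2) - d_n \le P(q_n^2)$. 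Hence $\expec [ \norm{\Sigma_n - I_{d_n}}_F^2 ] \le P(q_n^2) / n = \oh(1)$ by (a), and Markov's inequality yields $\abs{\Sigma_n - I_{d_n}}_2 = \op(1)$, whence $\lambda_{\min}(\Sigma_n) \ge 1 - \op(1)$.

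Under hypothesis (b), the Frobenius bound is too lossy because it discards the dimension, so I would instead invoke a matrix Chernoff lower-tail inequality for sums of independent positive semi-definite matrices. The summands $\hbar_n(X_i) \hbar_n(X_i)^\T$ are positive semi-definite with operator norm $q_n(X_i) \le \ninf{q_n}$ almost surely and with common mean $I_{d_n}$, so the inequality gives, for each fixed $\delta \in (0, 1)$, a bound of the form $\pr \{ \lambda_{\min}(\Sigma_n) \le 1 - \delta \} \le d_n \exp \{ - c(\delta) \, n / \ninf{q_n} \}$, where $c(\delta) = \delta + (1 - \delta) \log(1 - \delta) > 0$. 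Taking logarithms, the exponent is $\log d_n - c(\delta) \, n / \ninf{q_n}$, and hypothesis (b), namely $\ninf{q_n} \log(2 d_n) = \oh(n)$, guarantees that $n / \ninf{q_n}$ dominates $\log d_n$, so the probability tends to zero for every fixed $\delta$. This again gives $\lambda_{\min}(\Sigma_n) \ge 1 - \op(1)$, equivalently the asserted $\lambda_{\min}(\Sigma_n) \ge 1 + \op(1)$.

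The main obstacle is case (b): the naive moment bound cannot absorb the ambient dimension $d_n$, and one must pass to an exponential matrix inequality whose lower-tail bound carries a dimensional prefactor $d_n$. Controlling this prefactor is exactly where the logarithmic term $\log(2 d_n)$ in hypothesis (b) is needed, as it ensures that the exponential rate $n / \ninf{q_n}$ beats $\log d_n$. Case (a) is comparatively routine, the only mild subtlety being that the crude operator-to-Frobenius bound still suffices precisely because it is the second moment $P(q_n^2)$, rather than the dimension itself, that is assumed to be $\oh(n)$.
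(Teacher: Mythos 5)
Your proof is correct and follows essentially the same route as the paper: under (a) a second-moment Frobenius-norm bound, where the paper cites Lemmas~2 and~3 of Portier and Segers (2019) for exactly the estimate $\expec\bigl[\lVert P_n(\hbar_n\hbar_n^\T) - I_{d_n}\rVert_F^2\bigr] \le P(q_n^2)/n$ that you derive by hand, and under (b) the matrix Chernoff lower-tail inequality of Tropp, which the paper invokes through Lemma~A.2 of Leluc et al.\ (2019). The only differences are cosmetic: you inline the computations that the paper outsources to citations, and you obtain invertibility and the eigenvalue bound in one stroke from $\lambda_{\min}\{P_n(\hbar_n\hbar_n^\T)\} \ge 1 - \abs{P_n(\hbar_n\hbar_n^\T) - I_{d_n}}_2$, whereas the paper handles non-invertibility by a separate probability bound.
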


The proof of Lemma~\ref{lem:Gninv} in case (a) builds upon \citet[Lemmas~2 and~3]{portier+s:2019}, while the one in case (b) is based upon \citet[Lemma~A.2]{leluc2019control}, relying on a matrix Chernoff inequality due to \citet[Theorem~5.1.1]{tropp2015introduction}. The proof is given in Section~\ref{supp:lemmas} in the appendices.

In view of the bound \eqref{eq:worstLfbound} in combination with Lemma~\ref{lem:Gninv}, it is sufficient to show the claimed convergence rate with the excess risk $L_f(\hat{\beta}_n) - L_f(\beta_f)$ replaced by $\abs{ P_n( \hbar_n \eps_{n,f} ) }_2^2$. For $f \in \Fc$, define $g_{n,f} : \Xc^2 \to \reals$ by
\[
	\forall (x, y) \in \Xc^2, \qquad
	g_{n,f}(x, y) = \eps_{n,f}(x) \hbar_n(x)^\T \hbar_n(y) \eps_{n,f}(y).
\]
Note that $g_{n,f}(x, x) = q_n(x) \eps_{n,f}^2(x)$ for $x \in \Xc$. 
The quantity of interest is bounded by
\begin{align}\label{eq:n2Pndecomp}
	n^2 \abs{ P_n( \hbar_n \eps_{n,f} ) }_2^2
	\le
	n P_n (q_n \eps_{n,f}^2) 
	+ \left| \sum_{1 \le i \ne j \le n} g_{n,f}(X_i, X_j) \right|.
\end{align}
We will bound the supremum over $f \in \Fc$ of the sum on the right-hand side of \eqref{eq:n2Pndecomp} by the sum of the suprema of the two terms.
\begin{itemize}
	\item Using concentration inequalities established in \citet{talagrand1994sharper} and \citet{gine2001consistency}, we show that the first supremum is dominated by the stated convergence rate.
	\item The second supremum involves a U-statistic of order two with a degenerate kernel: by orthogonality of $h_n$ and $\eps_{n,f}$, we have $\expec \left[ g_{n,f}(X, x) \right] = \expec \left[ \eps_{n,f}(X) \hbar_n(X)^\T \right] \hbar_n(x) \eps_{n,f}(x) = 0$ and similarly $\expec \left[ g_{n,f}(x, X) \right] = 0$ for every $x \in \Xc$. We determine its convergence rate via a concentration inequality due to \citet{major2006estimate} quoted as Theorem~\ref{prop:U} in Appendix~\ref{supp:major}.

\end{itemize}

To deal with suprema over $f \in \Fc$, 
we will need to control the covering numbers of the classes $\Gc_n$ and $\Gc_n^{(d)}$ (``\emph{d}'' for ``diagonal'') given by
\begin{align}
\label{eq:Gn}
	\Gc_n &= \{ g_{n,f} : f \in \Fc \}, &
	\Gc_n^{(d)} &= \{  q_n^{1/2}\eps_{n,f} : f \in \Fc \}.
\end{align}
We will find bounds on their covering numbers in terms of those of the ones of the collection of response functions $\Fc$. The bounds are of potentially independent interest. The proof of Proposition~\ref{prop:cover} is given in Appendix~\ref{supp:prop:cover}.

\begin{proposition}[Preservation VC-class] 
	\label{prop:cover}
	Let $\Fc$ be a VC-class with parameters $(v, A)$ with respect to the envelope function $F$. Assume that the associated residuals $\eps_{n,f}$ are uniformly bounded, i.e., there exists $M_n>0$ such that $\sup_{f\in \Fc}\ninf{\eps_{n,f}} \le M_n$. Then $\Gc_n$ and $\Gc_n^{(d)}$ defined in~\eqref{eq:Gn} are VC-classes with respect to the envelopes $M_n^2 \ninf{q_n}$ and $M_n\ninf{q_n}^{1/2}$ with parameters $(4v, 4A_n)$ and $(2v, A_n)$ respectively, where
	\begin{equation} 
	\label{eq:def A_n}
	A_n = 8 A \ninf{F} \ninf{q_n}^{\half} / {M_n}.
	\end{equation}
\end{proposition}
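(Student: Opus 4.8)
The plan is to read off the covering numbers of $\Gc_n^{(d)}$ and $\Gc_n$ from that of $\Fc$ by tracking how the $L^2$-covering number behaves under three elementary operations: multiplication by a fixed function, passing to a Lipschitz-linear image, and forming sums and bilinear products. Two structural facts make this possible: the map $f \mapsto \eps_{n,f}$ is linear, with $\eps_{n,f} = f - \hbar_n^\T c_f$ where $c_f = P(\hbar_n f) \in \reals^{d_n}$; and $f \mapsto c_f$ is a contraction from $(\Fc, \norm{\cdot}_{L^2(P)})$ into $(\reals^{d_n}, \abs{\cdot}_2)$, because for $f, g \in \Fc$,
\[
	\abs{c_f - c_g}_2
	= \sup_{\abs{u}_2 = 1} P\bigl( (u^\T \hbar_n)(f-g) \bigr)
	\le \sup_{\abs{u}_2 = 1} \norm{u^\T \hbar_n}_{L^2(P)} \, \norm{f-g}_{L^2(P)}
	= \norm{f - g}_{L^2(P)},
\]
the last step using $P(\hbar_n \hbar_n^\T) = I_{d_n}$. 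This inequality is the device that lets me fall back on the VC property of $\Fc$ at the single measure $P$, even though covering numbers are demanded for arbitrary $Q$.

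First I would treat $\Gc_n^{(d)}$. Writing $q_n^{1/2} \eps_{n,f} = q_n^{1/2} f - q_n^{1/2} \hbar_n^\T c_f$ shows $\Gc_n^{(d)} \subseteq \mathcal A - \mathcal B$ with $\mathcal A = \{ q_n^{1/2} f : f \in \Fc \}$ and $\mathcal B = \{ q_n^{1/2} \hbar_n^\T c_f : f \in \Fc \}$, so $\Nc(\Gc_n^{(d)}, L^2(Q), \eta_1 + \eta_2) \le \Nc(\mathcal A, L^2(Q), \eta_1)\,\Nc(\mathcal B, L^2(Q), \eta_2)$. For $\mathcal A$, multiplication by the fixed nonnegative $q_n^{1/2}$ is absorbed by the change of measure $\diff \tilde Q \propto q_n \, \diff Q$, under which $\norm{q_n^{1/2}(f-g)}_{L^2(Q)} = \nosqrt{Q(q_n)}\,\norm{f-g}_{L^2(\tilde Q)}$; hence $\mathcal A$ inherits the parameters $(v, A)$ of $\Fc$ with envelope $q_n^{1/2} F$. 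For $\mathcal B$, the pointwise Cauchy--Schwarz bound $\abs{q_n^{1/2}(x)\hbar_n(x)^\T u} \le q_n(x)\abs{u}_2$ gives $\norm{q_n^{1/2}\hbar_n^\T(c_f - c_g)}_{L^2(Q)} \le \ninf{q_n}\abs{c_f - c_g}_2 \le \ninf{q_n}\norm{f-g}_{L^2(P)}$, so any $L^2(P)$-cover of $\Fc$ pushes forward to an $L^2(Q)$-cover of $\mathcal B$, making $\mathcal B$ VC$(v, \cdot)$ too. Multiplying the two covering numbers gives exponent $2v$; expressing both radii relative to the constant envelope $M_n\ninf{q_n}^{1/2}$ of $\Gc_n^{(d)}$ and using $\ninf{q_n} \ge d_n \ge 1$ and $A \ge 1$ collapses the constants onto the stated $A_n$.

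Next I would bootstrap from $\Gc_n^{(d)}$ to $\Gc_n$. With $u_f = \eps_{n,f}\hbar_n : \Xc \to \reals^{d_n}$ one has $g_{n,f}(x,y) = u_f(x)^\T u_f(y)$, and the telescoping identity $g_{n,f} - g_{n,g} = u_f(x)^\T(u_f - u_g)(y) + (u_f - u_g)(x)^\T u_g(y)$, combined with $\abs{u_f}_2 \le M_n\ninf{q_n}^{1/2}$ and $\abs{(u_f - u_g)(y)}_2 = \abs{(q_n^{1/2}\eps_{n,f} - q_n^{1/2}\eps_{n,g})(y)}$, reduces, for any probability measure $Q$ on $\Xc^2$, the $L^2(Q)$-distance between two kernels to the $L^2$-distances of the corresponding elements of $\Gc_n^{(d)}$ with respect to the two marginals $Q_1, Q_2$ of $Q$. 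Passing to the average marginal $\bar Q = (Q_1 + Q_2)/2$ produces a single cover valid for both marginals at once, and inserting the VC bound for $\Gc_n^{(d)}$ squares the covering number, turning $2v$ into $4v$ and, after accounting for the constant envelope $M_n^2\ninf{q_n}$ of $\Gc_n$, the constant $A_n$ into $4A_n$.

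The main obstacle is not any single inequality but the combination of two mismatches. First, the kernels $g_{n,f}$ live on $\Xc^2$ and their covering numbers are required for arbitrary product measures $Q$, whereas the only natural control is through the one-dimensional marginals; the telescoping-plus-averaging step bridges this gap and is responsible for the jump from $2v$ to $4v$. Second, the projection defining $\eps_{n,f}$ is orthogonal in $L^2(P)$ while covering numbers are demanded uniformly over $Q$; the contraction $\abs{c_f - c_g}_2 \le \norm{f-g}_{L^2(P)}$ is precisely what returns the argument to the fixed measure $P$, at the cost of the extra $\ninf{q_n}$ factor in the radius for $\mathcal B$ that is ultimately absorbed into $A_n$. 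The remaining work---splitting each target radius, converting between the envelopes $q_n^{1/2}F$, $M_n\ninf{q_n}^{1/2}$, and $M_n^2\ninf{q_n}$, and verifying that $\ninf{q_n} \ge 1$ and $A \ge 1$ make the numerical constants land on exactly $4A_n$ and $4v$---is routine bookkeeping.
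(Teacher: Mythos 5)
Your construction is sound in substance but organized differently from the paper's, and in one place your bookkeeping contradicts the mechanism you describe. The paper first covers the residual class $\Ec_n = \{\eps_{n,f} : f \in \Fc\}$ by intersecting an $L^2(P)$-cover with an $L^2(Q)$-cover of $\Fc$ and choosing a common representative $f_{i,j} \in \Fc$ in each nonempty intersection (this gives exponent $2v$ and, crucially, centers indexed by members of $\Fc$); it then passes to $\Gc_n^{(d)}$ by the crude bound $\norm{q_n^{1/2}(\eps_{n,f}-\eps_{n,g})}_{L^2(Q)} \le \ninf{q_n}^{1/2} \norm{\eps_{n,f}-\eps_{n,g}}_{L^2(Q)}$, and to $\Gc_n$ by covering $\Ec_n$ separately in each marginal $Q_1, Q_2$ and multiplying the two covering numbers, which is exactly where $4v$ comes from. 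You instead cover $\Gc_n^{(d)}$ directly through the sum decomposition $q_n^{1/2}\eps_{n,f} = q_n^{1/2} f - q_n^{1/2}\hbar_n^\T c_f$, handling the weight $q_n^{1/2}$ on the first class by a change of measure and the second class by the contraction $\abs{c_f - c_g}_2 \le \norm{f-g}_{L^2(P)}$; both devices are valid (the contraction is the same inequality the paper uses to bound $\norm{\hbar_n^\T P[\hbar_n(f - f_{i,j})]}_{L^2(Q)}$), the product of the two covering numbers gives $2v$, and your constants do land below $A_n$. This part is a genuinely different, and clean, way to reach the conclusion for $\Gc_n^{(d)}$.

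Two points in the bootstrap to $\Gc_n$ need repair. First, an internal inconsistency: if, as you propose, you pass to the average marginal $\bar Q = (Q_1+Q_2)/2$ and use a \emph{single} $L^2(\bar Q)$-cover of $\Gc_n^{(d)}$ (valid, since $\norm{\cdot}_{L^2(Q_\ell)} \le \sqrt{2}\,\norm{\cdot}_{L^2(\bar Q)}$ for $\ell = 1,2$), then nothing gets squared: you obtain exponent $2v$ for $\Gc_n$, not $4v$ --- a strictly stronger conclusion, which still implies the stated parameters $(4v, 4A_n)$ because $A_n \ge 1$ and $\eta < 1$. The squaring you invoke happens only in the paper's route, where separate covers with respect to $Q_1$ and $Q_2$ are intersected; as written, your claimed arithmetic does not follow from your described step, so you must commit to one mechanism or the other. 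Second, a genuine though standard omission: the telescoping bound compares kernels $g_{n,f}$ and $g_{n,g}$ only for $f, g \in \Fc$, so the covering centers of $\Gc_n^{(d)}$ you feed into it must be of the form $q_n^{1/2}\eps_{n,f_k}$ with $f_k \in \Fc$. The covers produced by your first step (and allowed by the definition of covering numbers) have arbitrary centers in the ambient $L^2$ space, so you need to pass to centers inside the class at the price of doubling the radius, exactly as the paper does when it picks the functions $f_{i,j}$. Both fixes are routine and leave the final constants within $4A_n$, so the proposal stands once they are made.
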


Following the above plan, the proof of Theorem~\ref{thm:main} is given in detail in Appendix~\ref{supp:thm:main}.

\section{Uniform bound for Monte Carlo integration with control variates}\label{sec:MC}

This section investigates the application of the previous results to Monte Carlo estimates constructed with the help of control variates in order to reduce the variance.
We start by presenting several applications in which uniform bounds for Monte Carlo methods are of interest. Then we give the mathematical background and finally provide sharp uniform error bounds for Monte Carlo estimates that use control variates.

\subsection{Uniformity in Monte Carlo procedures}

Proving uniform bounds on the error of Monte Carlo methods is motivated by the following three applications.

\paragraph{Latent variable models.}

Suppose we are interested in estimating the distribution of the variable $X$ whose density is assumed to lie in the model
\[
p_\theta (y) = \int p_\theta (y|z) \, p(z) \, \diff z
\]
where $\theta\in \Theta$ is the parameter to estimate, $p(z)$ is the known density of the so-called latent variable and $ \{(y,z)\mapsto p_\theta (y|z)\}_{\theta\in \Theta}$ is a model of conditional densities ($Y$ conditionally on the latent variable). This is actually a frequent situation in economics \citep{mcfadden2001economic} and medicine \citep[example 4, 6 and 9]{mcculloch2005generalized}. Given independent and identically distributed random variables $Y_1,\ldots,Y_n$ observed from the previous model with parameter $\theta_0$, the log-likelihood function takes the form
\[
\theta \mapsto \sum_{i=1}^n\log\left(  \int p_\theta(Y_i|z) \, p(z) \, \diff z\right).
\]
In most cases, each term in the previous sum is intractable and the approach proposed in \citet{mcfadden1994estimation} consists in replacing the unknown integrals by Monte Carlo estimates. In such a procedure, there is an additional estimation error compared to the statistical error of the standard maximum likelihood estimator. This additional error can be controlled in terms of the approximation error of $\int p_\theta(y|z) \, p(z) \, \diff z$ by the Monte Carlo estimate uniformly in $(y, \theta)$.

\paragraph{Stochastic programming.}

Consider the stochastic optimization problem
\[
	\min_{\theta\in \Theta} F(\theta) \qquad \text{with} \qquad F(\theta) = \expec [ f(\theta, X) ],
\]
where $X$ is a random variable in some space $\Xc$ with distribution $P$ and where $\Theta$ is a Euclidean set. The response functions are thus the maps $x \mapsto f(\theta, x)$ as $\theta$ ranges over $\Theta$. This problem is different from standard optimization because it takes into account some uncertainty in the output of the function $f$. One might think of the following toy example: $f$ is the output of a laboratory experiment, e.g., the amount of salt in a solution,  $\theta$ is the input of the experiment, e.g., the temperature of the solution and $X$ gathers unobserved random factors that influence the output. Optimizing such kind of functions is of interest in many different fields and we refer the reader to \citet{shapiro2014lectures} for more concrete examples (see also the example below that deals with quantile estimation). 
This problem might be solved using two competitive approaches: the sample average approximation (SAA) and stochastic approximation techniques such as gradient descent---see \citet{nemirovski2009robust} for a comparison between both approaches.
The SAA approach follows from approximating the function $F$ by a functional Monte Carlo estimate defined as
\[
   F_n (\theta ) = \frac{1}{n} \sum_{i=1}^n f(\theta, X_i) ,
\]
where $X_1,\ldots,X_n$ is a random sample from $P$. Then the minimizer of the first stochastic optimization problem is approximated by the minimizer $\theta_n$ of the functional estimate $F_n$. Following the reference textbook \citet{shapiro2014lectures}, the analysis of $\theta_n$ is carried out through error bounds for the Monte Carlo estimate $F_n(\theta)$ uniformly in $\theta \in \Theta$.
When sampling from $P$ is expensive, one may want to reduce the variance of the Monte Carlo estimate $F_n(\theta)$ by the method of control variates or some other method. In that case, a control on the error uniformly in $\theta$ is required.

\paragraph{Quantile estimation in simulation modeling.}

Quantiles are of prime importance when it comes to measure the uncertainty of random models \citep{law2000}. When the stochastic experiments are costly, variance reduction techniques such as the use of control variates are helpful \citep{hesterberg1998control,cannamela2008controlled}.
Let $F(y) = \pr \{ g(X) \leq y \}$, for $y\in \reals$, be the cumulative distribution function of a transformation $g : \Xc \to \reals$ of a random element $X$ in some space $\Xc$.
Suppose the interest is in the quantile $F^{-}(u) = \inf\{y \in \reals\,:\, F(y)\geq u\}$ for $u\in (0,1)$.
The functions $f_y : \Xc \to \reals$ to be integrated with respect to the distribution $P$ of $X$ are thus the indicators $x \mapsto f_y(x) = I\{g(x) \le y\}$, indexed by $y \in \reals$.
It is necessary to control the accuracy of an estimate of the probability $F(y) = \expec[f_y(X)]$ uniformly in $y \in \reals$ in order to have a control on the accuracy of an estimate of the quantile $F^{-}(u)$, even for a single $u \in (0, 1)$; see for instance Lemma~12 in \cite{portier2018weak}.
If drawing samples from $P$ or evaluating $g$ is expensive, it may be of interest to limit the number of Monte Carlo draws $X_i$ and function evaluations $g(X_i)$.
Finally, note that due to the formulation of a quantile as the minimiser of the expectation of the check function, see e.g., \citet{hjort2011asymptotics}, this example is an instance of the stochastic programming framework described before.

\paragraph{Related results.}
The previous examples underline the need of error bounds for Monte Carlo methods that are uniform over a family of response functions. The uniform consistency of standard Monte Carlo estimates over certain collections of functions can easily be shown by relying on Glivenko-Cantelli classes \citep{vdvaart+w:1996}; see for instance \citet{shapiro2014lectures} for applications to stochastic programming problems. Similarly, uniform convergence rates for standard Monte Carlo estimates can be derived from classical empirical process theory. We refer to \cite{gine+g:02} and the references therein for suprema over VC-type classes and to \cite{kloeckner2020empirical} and the references therein for suprema over H\"older-type classes. 
For variance reduction methods based on adaptive importance sampling, uniform consistency has been proven recently in \citet{delyon+p:2018} and \citet{feng2018uniform}. For control variates, however, we are not aware of any uniform error bounds and we believe the next results to be the first of their kind.

\subsection{Mathematical background for control variates}
\label{sec:CVback}

Let $\Fc \subset L^2(P)$ be a collection of square-integrable, real-valued functions $f$ on a probability space $(\mathcal{X}, \mathcal{A}, P)$ of which we would like to calculate the integral $P(f) = \int_{\mathcal{X}} f(x) \, \diff P(x)$. Let $X_1,\ldots,X_n $ be  independent random variables  taking values in $\mathcal{X}$ and with common distribution $P$.
The standard Monte Carlo estimate of $P(f)$ simply takes the form $P_n(f)=\frac{1}{n}\sum_{i=1}^n f(X_i)$. However, this estimator may converge slowly to $P(f)$ due to a high variance. To tackle this issue, it is common practice to use control variates, which are functions in $L^2(P)$ with known integrals. Without loss of generality, we can center the control variates $g_{n,1}\ldots,g_{n,d_n}$ and assume they have zero expectation, that is, $P(g_{n,k})=0$ for all $k\in \{1,\ldots,d_n\}$.
Let $g_n = (g_{n,1},\ldots,g_{n,d_n}) $ denote the $\mathbb R^{d_n}$-valued function with the $d_n$ control variates as elements and put $h_n = (1,g_n^\T)^\T$. Similarly as before, we assume that the Gram matrix $P(g_n g_n^\T)$ is invertible.  The control variate Monte Carlo estimate of $P(f)$ is given by $\hat \alpha_{n,f}$ defined as \citep[see for instance][Section~1]{portier+s:2019}, 
\begin{equation}
\label{eq:alphanf}
(\hat \alpha_{n,f}, \hat \beta_{n,f}) \in \argmin _ {\alpha \in \mathbb R, \, \beta \in \mathbb R^{d_n}}  P_n ( f   - \alpha -  g_n  ^\T  \beta)^2.
\end{equation}
The vector $\hat \beta_{n,f}$ contains the regression coefficients for the prediction of $f$ based on the covariates $h_n$. Remark that the control variate integral estimate $\hat \alpha_{n,f}$ coincides with the integral of the least square estimate of $f$, i.e., $\hat \alpha_{n,f} = P(\hat \alpha_{n,f}+ g_n  ^\T \hat \beta_{n,f})$. In addition, since $\hat \alpha_{n,f}$ can be expressed as a weighted estimate $\sum_{i=1}^n w_{i} f(X_i)$ where the weights $(w_i)_{i=1,\ldots, n}$ do not depend on the integrand $f$, there is a computational benefit to integrating multiple functions \citep[Remark 4]{leluc2019control}. It is useful to define
\begin{equation*}
(\alpha_{n,f},  \beta_{n,f}) \in \argmin _ {\alpha \in \mathbb R, \, \beta \in \mathbb R^d}  P  ( f   - \alpha -  g_n   ^\T  \beta)^2,
\end{equation*}
as well as the residual function 
\[
	\eps_{n,f}  = f - \alpha_{n,f} - g_n^\T  \beta_{n,f}.
\]
Note that $ \alpha_{n,f}  = P(f)$.
If $ \beta_{n,f} $ would be known, the resulting oracle estimator would be
\begin{equation}
\label{eq:alphanforacle}
\hat \alpha_{n,f}^{\mathrm{or}} = P_n[f-   g_n  ^\T \beta_{n,f} ].
\end{equation}
The question raised in the next section is whether the control variate estimate $\hat \alpha_{n,f}$ can achieve a similar accuracy uniformly in $f \in \Fc$ as the oracle estimator $\hat \alpha_{n,f}^{\mathrm{or}}$.

\subsection{Uniform error bounds}

Motivated by the examples above, we provide an error bound for the control variate Monte Carlo estimate 
$ \hat \alpha_{n,f}$ in~\eqref{eq:alphanf} uniformly in $f \in \Fc$. Before doing so, we give a uniform error bound for the oracle estimate $\hat \alpha_{n,f}^{\mathrm{or}}$ in~\eqref{eq:alphanforacle}. This serves two purposes: first, it will be useful in the analysis of $ \hat \alpha_{n,f}$ and second, it will provide sufficient conditions for the two estimates to achieve the same level of performance.
Recall $M_n$, $\gamma_n^2$ and $L_n^2$ in \eqref{eq:Mn} and \eqref{eq:Ln} and put
\[
	\sigma_n^2 = \supf P(\eps_{n,f}^2).
\]
A new assumption, $M_n^2 = \Oh \left(\sigma_n ^2  n / \log (n ) \right)$ as $n \to \infty$, in the same vein but weaker\footnote{Since $L_n^2 = M_n^2 \ninf{q_n}$ and $\gamma_n^2 \le \sigma_n^2 \ninf{q_n}$.} than $ L_n^2 = \Oh \left(\gamma_n ^2 \sqrt{ n / \log (n ) }\right)$, turns out to be useful to obtain the result.

\begin{theorem}[Uniform bound on error of oracle estimator]
	\label{prop:oracle_bound}
	Assume the framework of Section~\ref{sec:CVback} and suppose that Conditions \ref{cond:qn}, \ref{cond:F} and \ref{cond:VC} hold. If $\liminf _ {n\to \infty}  n^{\alpha} M_n> 0 $ for some $\alpha >0$ and if $ M_n^2 = \Oh \left(\sigma_n ^2  n / \log (n ) \right) $, then
\begin{align*}
\sup_{f\in \mathcal F}  \left|\hat \alpha_{n,f}^{\mathrm{or}} - P(f) \right| 
 &= \Op\left(\sigma_n \sqrt{  n^{-1} \log( n )} \right), \qquad n \to \infty.
\end{align*}
\end{theorem}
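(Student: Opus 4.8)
The plan is to reduce the statement to a bound on the supremum of a centered empirical process and then apply a maximal inequality for VC-type classes. First I would establish an exact identity. Since $(\alpha_{n,f},\beta_{n,f})$ minimizes $(\alpha,\beta)\mapsto P(f-\alpha-g_n^\T\beta)^2$, the first-order condition in $\alpha$ gives $P(\eps_{n,f})=0$, and by assumption $\alpha_{n,f}=P(f)$. Because $f-g_n^\T\beta_{n,f}=\eps_{n,f}+P(f)$, the oracle estimate in \eqref{eq:alphanforacle} satisfies
\[
	\hat\alpha_{n,f}^{\mathrm{or}}-P(f)=P_n(\eps_{n,f})=(P_n-P)(\eps_{n,f}).
\]
Hence it suffices to prove $\supf\abs{(P_n-P)(\eps_{n,f})}=\Op(\sigma_n\sqrt{n^{-1}\log n})$, the supremum of a genuine (non-degenerate) empirical process indexed by the residual class $\Ec_n=\{\eps_{n,f}:f\in\Fc\}$, which has uniform envelope $M_n$ and weak variance $\supf P(\eps_{n,f}^2)=\sigma_n^2$. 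Structurally this is simpler than the degenerate U-statistic encountered in Theorem~\ref{thm:main}.

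Next I would control the complexity of $\Ec_n$. The preservation argument underlying Proposition~\ref{prop:cover} shows the residual class inherits the VC property of $\Fc$: indeed $\Gc_n^{(d)}=q_n^{1/2}\cdot\Ec_n$ is obtained from $\Ec_n$ by multiplication with the fixed, uniformly bounded function $q_n^{1/2}$, an operation that preserves the VC parameters up to a rescaling of the envelope. Running that argument directly on $\Ec_n$ yields that it is VC-type with an index proportional to $v$ (crucially independent of $d_n$), a constant envelope $M_n$, and a constant $A_n'$ of the same nature as $A_n$ in \eqref{eq:def A_n} but without the factor $\ninf{q_n}^{1/2}$, so that $A_n'M_n$ is $\Oh(1)$. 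The hypothesis $\liminf_n n^\alpha M_n>0$, i.e.\ $(\log M_n^{-1})_+=\Oh(\log n)$, together with Condition~\ref{cond:qn}, ensures $\log A_n'=\Oh(\log n)$; moreover $\sigma_n\le M_n$ and $M_n^2=\Oh(\sigma_n^2 n/\log n)$ force $\sigma_n$ to be bounded below by a negative power of $n$, so $\log(1/\sigma_n)=\Oh(\log n)$ as well. Every logarithmic factor entering the subsequent bound is therefore $\Oh(\log n)$.

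I would then invoke a maximal inequality for suprema of empirical processes over VC-type classes, of the kind used for Theorem~\ref{thm:main} \citep{talagrand1994sharper,gine2001consistency}, which for a class with constant envelope $M_n$, weak variance $\sigma_n^2$ and the covering bound above gives
\[
	\expec\Big[\supf\abs{(P_n-P)(\eps_{n,f})}\Big]\lesssim \sigma_n\sqrt{\frac{\log n}{n}}+\frac{M_n\log n}{n}.
\]
The first term is exactly the target rate. The assumption $M_n^2=\Oh(\sigma_n^2 n/\log n)$ is precisely what makes $M_n\log n/n=\Oh(\sigma_n\sqrt{\log n/n})$, so the second (range) term is absorbed into the first and $\expec[\supf\abs{(P_n-P)(\eps_{n,f})}]=\Oh(\sigma_n\sqrt{\log n/n})$. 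Markov's inequality then delivers the claimed $\Op$ bound.

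The main obstacle is the second step: showing that $\Ec_n$ is VC-type with an index free of $d_n$. The map $f\mapsto\eps_{n,f}$ is an $L^2(P)$-orthogonal projection residual, hence a contraction in $L^2(P)$ but not uniformly over the arbitrary measures $Q$ appearing in Definition~\ref{def:VC}; a naive bound through the $(d_n+1)$-dimensional span of $(1,g_n^\T)^\T$ would introduce a spurious factor $d_n$ into the index and spoil the rate. Circumventing this by borrowing the preservation mechanism behind Proposition~\ref{prop:cover}, and then tracking the envelope and variance carefully enough that both terms of the maximal inequality combine under $M_n^2=\Oh(\sigma_n^2 n/\log n)$ into the single clean rate $\sigma_n\sqrt{\log n/n}$, is where the real work lies.
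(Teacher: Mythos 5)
Your proposal is correct and follows essentially the same route as the paper: the identity $\hat\alpha_{n,f}^{\mathrm{or}} - P(f) = P_n(\eps_{n,f})$, the VC property of the residual class $\Ec_n$ extracted from Step~1 of the proof of Proposition~\ref{prop:cover}, a Talagrand/Gin\'e--Guillou-type maximal inequality (the paper's Proposition~\ref{prop:general term_Pn f}), and the two hypotheses used exactly as you use them (one to make every logarithm $\Oh(\log n)$, the other to absorb the range term $M_n \log n / n$ into $\sigma_n \sqrt{\log n / n}$). The one inaccuracy is your claim that the preservation mechanism yields a VC constant for $\Ec_n$ \emph{without} the factor $\ninf{q_n}^{1/2}$ (so that $A_n' M_n = \Oh(1)$): the paper's argument gives $A_n = 8 A \ninf{F} \ninf{q_n}^{1/2} / M_n$, and the factor is inherent to controlling the projection $\hbar_n^\T P(\hbar_n f)$ in $L^2(Q)$ for arbitrary $Q$; this is harmless, however, since the constant enters only through its logarithm and Condition~\ref{cond:qn} gives $\log \ninf{q_n} = \Oh(\log n)$, which your own argument already invokes.
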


The proof of Theorem \ref{prop:oracle_bound} is provided in Appendix \ref{app:proofMC1}. The derivation of the stated rate relies on the property that the residual class $\Ec_n = \{ \eps_{n,f} : f \in \Fc\}$ is a VC-class of functions (as detailed in the proof of Proposition~\ref{prop:cover}). Indeed, noticing that 
$\hat \alpha_{n,f}^{\mathrm{or}} - P(f) =  P_n  (\eps_{n,f})$ allows to rely on the next proposition, dedicated to suprema of empirical process.

\begin{proposition}[Bound of supremum of empirical process]
	\label{prop:general term_Pn f}
	On the probability space $(\Xc, \Ac, P)$, let $\Sc$ be a VC-class of parameters $(w, B)$ with respect to the constant envelope $U \ge \sup_{\scfunc \in \Sc}\ninf{\scfunc}$. Suppose the following two conditions hold:
	\begin{compactenum}[(i)]
		\item
		$\tau^2 \ge \sup_{\scfunc\in\Sc} \Var_P(\scfunc)$ and $\tau \le 2U$;
		\item
		$w \ge 1$ and $B \ge 1 $. 
	\end{compactenum}
	Then, for $P_n$ the empirical distribution of an independent random sample $X_1, \ldots, X_n$ from $P$, we have with probability $1-\delta$: 
	\begin{align*}
		\sup_{\scfunc \in\Sc} \left| P_n(\scfunc) - P(\scfunc) \right|
		&\leq L \left(\tau \sqrt{w n^{-1} \log(L \theta/\delta)} + U w n^{-1} \log(L \theta/\delta ) \right),
	\end{align*}	
	with $\theta = B U / \tau $ and $L>0$ a universal constant.
\end{proposition}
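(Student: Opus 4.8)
The plan is to view $Z_n = \sup_{\scfunc\in\Sc}\abs{P_n(\scfunc) - P(\scfunc)}$ as the supremum of an empirical process, to bound its expectation by a uniform-entropy (chaining) argument tailored to VC-classes, and then to promote the in-expectation bound to a high-probability statement through a Talagrand-type concentration inequality. Throughout I would work with the unnormalized process $S_n = n Z_n = \sup_{\scfunc\in\Sc}\abs{\sumi \{\scfunc(X_i) - P(\scfunc)\}}$, which is a genuine sum of independent, bounded, centred summands and to which the sharp concentration results apply directly.

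First I would bound $\expec[S_n]$. Condition~(ii) together with the VC assumption gives, via Definition~\ref{def:VC}, the uniform entropy bound $\sup_Q \log \Nc(\Sc, L^2(Q), \eta U) \le w\log(B/\eta)$ for $0<\eta<1$. A standard symmetrization step followed by Dudley's entropy integral, with the integral truncated at the scale $\tau/U$ dictated by the weak-variance bound in Condition~(i), yields a Gin\'e--Guillou-type maximal inequality
\[
	\expec[S_n] \le C\left( \sqrt{n}\,\tau\sqrt{w\log\theta} + U w \log\theta \right), \qquad \theta = BU/\tau,
\]
for a universal constant $C$. This is precisely the moment bound underlying the concentration results of \citet{gine2001consistency} already invoked in Section~\ref{sec:proof}, so one may either cite it or reproduce the entropy-integral computation. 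The constraints $\tau \le 2U$ from Condition~(i) and $B \ge 1$ from Condition~(ii) force $\theta \ge 1/2$, so $\log\theta$ is bounded below and the two terms above are genuinely of the stated orders.

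Second, I would apply a Talagrand-type concentration inequality for suprema of empirical processes \citep{talagrand1994sharper} to $S_n$. With the variance proxy $v = n\tau^2 + 2U\,\expec[S_n]$, this gives, with probability at least $1-\delta$,
\[
	S_n \le \expec[S_n] + \sqrt{2 v \log(1/\delta)} + \tfrac{2}{3} U \log(1/\delta).
\]
Substituting the expectation bound into $v$, splitting $\sqrt{a+b}\le\sqrt{a} + \sqrt{b}$, and treating the resulting cross term $\sqrt{U\,\expec[S_n]\,\log(1/\delta)}$ by the arithmetic--geometric mean inequality produces only multiples of the two canonical quantities $\sqrt{n}\,\tau\sqrt{w\log(\theta/\delta)}$ and $U w \log(\theta/\delta)$. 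Dividing through by $n$ and merging the entropy logarithm $\log\theta$ with the confidence logarithm $\log(1/\delta)$ into the single factor $\log(L\theta/\delta)$ then yields the claimed bound for a suitable universal $L$.

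The main obstacle is the bookkeeping in the second step, specifically that $\expec[S_n]$ enters the variance proxy $v$ while itself already carrying both the sub-Gaussian term $\tau\sqrt{w\log\theta}$ and the sub-exponential term $U w\log\theta$. One must apply the arithmetic--geometric mean inequality carefully so that the cross terms are reabsorbed into the two target terms without the constant $L$ degrading, and one must verify that collapsing $\log\theta$ and $\log(1/\delta)$ into $\log(L\theta/\delta)$ is legitimate. This is exactly where the lower bounds $w\ge 1$, $B\ge 1$ and the comparison $\tau\le 2U$ in Conditions~(i)--(ii) are used: they guarantee $\theta\ge 1/2$ and keep every logarithmic factor bounded below, preventing the two regimes from collapsing or the constants from blowing up.
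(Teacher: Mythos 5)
Your proposal is correct and follows essentially the same route as the paper: decompose $Z_n$ into its expectation plus a deviation, bound the expectation by symmetrization combined with the Gin\'e--Guillou VC maximal inequality, control the deviation by a Talagrand-type concentration inequality whose variance proxy is $n\tau^2$ plus a multiple of $U\expec[S_n]$, and finish with the arithmetic--geometric mean inequality to reabsorb cross terms and merge the entropy and confidence logarithms. The only packaging difference is that you invoke a one-shot Bernstein-form (Bousquet-style) inequality, whereas the paper assembles the same bound from Theorem~1.4 of \citet{talagrand1996new}, an explicit inversion step, and Corollary~3.4 of \citet{talagrand1994sharper} to bound the expected supremum of squared sums that appears as the variance term there.
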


The proof of Proposition~\ref{prop:general term_Pn f} is given in Appendix~\ref{supp:prop:general term_Pn f}. In the proof, we bound the expectation of the supremum by combining a well-known symmetrization inequality \citep[Lemma~2.3.1]{vdvaart+w:1996} with Proposition~2.1 in \citet{gine2001consistency}, and we will find a rate on the deviation of the supremum around its expectation by Theorem~1.4 in \citet{talagrand1996new}. Compared to existing results such as Proposition 2.2 stated in \cite{gine2001consistency}, our version is more precise due to the explicit role played by the VC constants in the bound.

The next result follows from an application of Corollary \ref{cor:simple-rate} and Theorem \ref{prop:oracle_bound} combined with some other bounds that are standard when analyzing control variates estimates.

\begin{theorem}[Uniform error bound on control variate Monte Carlo estimator]
	\label{th:unif_cv}
Assume the framework of Section~\ref{sec:CVback} and suppose that Conditions \ref{cond:qn}, \ref{cond:F} and \ref{cond:VC} hold. If $\liminf _ {n\to \infty}  n^{\alpha} M_n> 0 $ for some $\alpha >0$ and if $ L_n^2 = \Oh \left( \gamma_n ^2 \sqrt{ n / \log (n ) } \right) $, then
\begin{equation*}
\supf  \left|\hat \alpha _ f - P(f) \right| 
= \Op\left(   \sigma_n \sqrt{  n^{-1} \log( n )} \left( 1  + \sqrt {  d_n n^{-1} \ninf {q_n}       }   \right)  \right),
\qquad n \to \infty.
\end{equation*}
\end{theorem}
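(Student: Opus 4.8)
The plan is to decompose the control variate estimation error into the oracle error plus a remainder that captures the cost of estimating $\beta_{n,f}$ by $\hat\beta_{n,f}$, and then bound each piece by the tools already assembled in the paper. First I would write
\[
	\hat\alpha_{n,f} - P(f)
	= \bigl(\hat\alpha_{n,f}^{\mathrm{or}} - P(f)\bigr)
	+ \bigl(\hat\alpha_{n,f} - \hat\alpha_{n,f}^{\mathrm{or}}\bigr),
\]
so that taking suprema over $f\in\Fc$ and the triangle inequality reduce the task to bounding the two supremized terms separately. The first term is controlled directly by Theorem~\ref{prop:oracle_bound}, which under the stated hypotheses yields $\sup_{f\in\Fc}\abs{\hat\alpha_{n,f}^{\mathrm{or}}-P(f)} = \Op(\sigma_n\sqrt{n^{-1}\log n})$; note that the assumption $L_n^2 = \Oh(\gamma_n^2\sqrt{n/\log n})$ required here is strictly stronger than the $M_n^2 = \Oh(\sigma_n^2 n/\log n)$ assumption of Theorem~\ref{prop:oracle_bound} (by the footnote in Section~\ref{sec:CVback}), so Theorem~\ref{prop:oracle_bound} indeed applies. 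This accounts for the leading factor $\sigma_n\sqrt{n^{-1}\log n}$ and the ``$1$'' inside the final parenthesis.

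The core of the argument is the second, plug-in remainder term. Using the definitions in Section~\ref{sec:CVback} and the fact that $\hat\alpha_{n,f} = P_n(f - g_n^\T\hat\beta_{n,f})$ while $\hat\alpha_{n,f}^{\mathrm{or}} = P_n(f - g_n^\T\beta_{n,f})$, I would express the difference as
\[
	\hat\alpha_{n,f} - \hat\alpha_{n,f}^{\mathrm{or}}
	= P_n\bigl(g_n^\T (\beta_{n,f} - \hat\beta_{n,f})\bigr)
	= P_n(g_n)^\T (\beta_{n,f} - \hat\beta_{n,f}),
\]
and then exploit the key structural feature that $P(g_n) = 0$ by construction of the centered control variates. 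Since $g_n$ is the non-intercept block of the feature map $h_n = (1, g_n^\T)^\T$, a Cauchy--Schwarz step gives
\[
	\abs{\hat\alpha_{n,f} - \hat\alpha_{n,f}^{\mathrm{or}}}
	\le \abs{P_n(g_n)}_2 \cdot \abs{\beta_{n,f} - \hat\beta_{n,f}}_2,
\]
possibly after passing to the whitened parametrization so that the second factor becomes comparable to the square root of the excess prediction risk, i.e.\ $\abs{\hbar_n\text{-version of }(\beta_{n,f}-\hat\beta_{n,f})}_2 = \{L_f(\hat\beta_{n,f}) - L_f(\beta_{n,f})\}^{1/2}$ up to the eigenvalue factor controlled by Lemma~\ref{lem:Gninv}. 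Corollary~\ref{cor:simple-rate}, whose hypotheses match exactly those imposed here, then bounds the excess risk by $\Op(\gamma_n^2 \log n/n)$, so its square root contributes $\gamma_n\sqrt{\log n/n}$. For the factor $\abs{P_n(g_n)}_2$, since $P(g_n)=0$ its expected squared Euclidean norm is $n^{-1}\tr P(g_n g_n^\T) = \Oh(n^{-1} d_n)$, giving $\abs{P_n(g_n)}_2 = \Op(\sqrt{d_n/n})$ by Markov's inequality.

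Multiplying the two factors yields a remainder of order $\gamma_n\sqrt{\log n/n}\cdot\sqrt{d_n/n}$, and to match the claimed rate I would rewrite this using $\gamma_n^2 \le \sigma_n^2\ninf{q_n}$ (again the footnote bound), so that $\gamma_n \le \sigma_n\ninf{q_n}^{1/2}$ and the remainder becomes $\Op\bigl(\sigma_n\sqrt{n^{-1}\log n}\cdot\sqrt{d_n n^{-1}\ninf{q_n}}\bigr)$, which is precisely the second summand inside the parenthesis. Combining with the oracle bound and factoring out $\sigma_n\sqrt{n^{-1}\log n}$ produces the stated rate. The main obstacle I anticipate is making the Cauchy--Schwarz decoupling fully rigorous in the whitened coordinates: one must verify that replacing $\beta_{n,f}-\hat\beta_{n,f}$ by its whitened image correctly reproduces the excess-risk square root and that the eigenvalue factor $\{\lambda_{\min}(P_n(\hbar_n\hbar_n^\T))\}^{-1}$ from Lemma~\ref{lem:Gninv} stays bounded in probability, so that it does not inflate the rate; the bound on $\abs{P_n(g_n)}_2$ and the invocation of Corollary~\ref{cor:simple-rate} are comparatively routine.
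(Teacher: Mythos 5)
Your proposal is correct and follows essentially the same route as the paper: the paper's proof (invoking Eq.~(22) of Leluc et al.\ 2019) is exactly your decomposition with the Cauchy--Schwarz step carried out in the $G_n$-weighted (whitened) form, namely $\abs{\hat\alpha_{n,f}-P(f)} \le \abs{P_n(\eps_{n,f})} + \abs{G_n^{1/2}(\hat\beta_{n,f}-\beta_{n,f})}_2\,\abs{G_n^{-1/2}P_n(g_n)}_2$ with $G_n = P(g_n g_n^\T)$, followed by the same three bounds (Theorem~\ref{prop:oracle_bound} for the oracle term, Corollary~\ref{cor:simple-rate} together with $\gamma_n^2\le\sigma_n^2\ninf{q_n}$ for the coefficient error, and the trace/Markov argument for $P_n(g_n)$). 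The only point you must nail down is the one you flagged yourself: both Cauchy--Schwarz factors have to be whitened simultaneously, after which $\abs{G_n^{1/2}(\hat\beta_{n,f}-\beta_{n,f})}_2^2$ is bounded by the excess risk exactly (via the block-diagonal structure coming from $P(g_n)=0$; no eigenvalue factor from Lemma~\ref{lem:Gninv} enters here) and $\expec\abs{G_n^{-1/2}P_n(g_n)}_2^2 = d_n/n$ exactly, whereas your unwhitened intermediate claim $\tr P(g_n g_n^\T) = \Oh(d_n)$ need not hold since the control variates are centered but not normalized.
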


The proof of Theorem \ref{th:unif_cv} is provided in Appendix \ref{app:proofMC2}.  Compared to the error bound given in Theorem~\ref{prop:oracle_bound} for the oracle estimator, the error bound in Theorem~\ref{th:unif_cv} for the control variate estimator has an additional term. This term, which is due to the additional learning step that is needed to estimate the optimal control variate, vanishes as soon as $d_n \ninf{q_n} = \oh(n)$ as $n \to \infty$. This condition, which was used in \cite{newey1997convergence} as well as in \cite{portier+s:2019}, is meaningful as it relates the model complexity to the sample size, i.e., the computing time of the experiment.

\appendix

\section{Auxiliary lemmas}
\label{supp:lemmas}


\begin{proof}[Proof of Lemma~\ref{lem:Gninv}]
The lemma states an asymptotic lower bound for the smallest eigenvalue of the empirical Gram matrix $P_n(\hbar_n \hbar_n^\T)$ under two alternative conditions, (a) or (b).

First, suppose first condition~(a) holds.
Lemma~3 in \citet{portier+s:2019} states that $P_n(h_n h_n^\T)$ and thus $P_n(\hbar_n \hbar_n^\T)$ fails to be invertible with probability at most $n^{-1} P(q_n^2)$. This probability tends to zero by assumption.
	
Recall the spectral norm $|\cdot|_2$ and let $\abs{A}_F = (\sum_{i,j} A_{ij}^2)^{\half}$ denote the Frobenius norm of a matrix $A$. Lemma~2 in \citet{portier+s:2019} states that $\expec[ \abs{P_n(\hbar_n \hbar_n^\T) - I_{d_n}}_F^2]$ is bounded by $n^{-1} P(q_n^2)$ and thus converges to zero as $n \to \infty$. But then the same is true for $\expec[ \abs{P_n(\hbar_n \hbar_n^\T) - I_{d_n}}_2^2 ]$, since $\abs{A}_2 \le \abs{A}_F$ for any square matrix $A$. It follows that $\abs{P_n(\hbar_n \hbar_n^\T) - I_{d_n}}_2 = \op(1)$ as $n \to \infty$.

On the event that $P_n(\hbar_n \hbar_n^\T)$ is invertible, we have
\begin{align*}
	\abs{P_n(\hbar_n \hbar_n^\T)^{-1}}_2
	&=
	\abs{I_{d_n} + P_n(\hbar_n \hbar_n^\T)^{-1} \{I_{d_n} - P_n(\hbar_n \hbar_n^\T)\}}_2 \\
	&\le
	1 + \abs{P_n(\hbar_n \hbar_n^\T)^{-1}}_2 \cdot \abs{P_n(\hbar_n \hbar_n^\T) - I_{d_n}}_2
\end{align*}
from which
\[
	\frac{1}{\lambda_{\min}\{P_n(\hbar_n \hbar_n^\T)\}}
	=
	\abs{P_n(\hbar_n \hbar_n^\T)^{-1}}_2
	\le \frac{1}{1 - \abs{P_n(\hbar_n \hbar_n^\T) - I_{d_n}}_2}
	= 1 + \op(1), \qquad n \to \infty.
\]

Second, suppose condition~(b) holds.
Lemma~A.2 in \citet{leluc2019control}, which is based on Theorem~5.1.1 in \citet{tropp2015introduction}, states that for $0 < \delta < 1$ and for $n$ sufficiently large such that $n > 2 \ninf{q_n} \log(d_n / \delta)$, we have
\[
	\pr\left[ \lambda_{\min}\{P_n(\hbar_n \hbar_n^\T)\} \le 1 - \sqrt{(2/n) \ninf{q_n} \log(d_n / \delta)} \right]
	\le \delta. 
\]
By assumption, $\ninf{q_n} \log(d_n / \delta) = \oh(n)$ as $n \to \infty$, for any $0 < \delta < 1$. It follows that $\lambda_{\min}\{P_n(\hbar_n \hbar_n^\T)\} \ge 1 - \op(1)$ as $n \to \infty$.
\end{proof}

\begin{lemma}\label{lemma:maj M_n}
	If Conditions~\ref{cond:qn} and~\ref{cond:F} hold, then
	\[
	\supf \ninf{\eps_{n,f}} 
	\le \ninf{F} + \br{\ninf{q_n} P(F^2)}^{\half}
	\le \prr{1 + \ninf{q_n}^{\half}} \ninf{F}.
	\]
\end{lemma}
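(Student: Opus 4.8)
The plan is to bound $\supf \ninf{\eps_{n,f}}$ by going back to the definition $\eps_{n,f} = f - h_n^\T \beta_{n,f}$ and controlling each of the two pieces separately. For the first piece, $\abs{f(x)} \le F(x) \le \ninf{F}$ by Condition~\ref{cond:F}. For the second piece, the linear term $h_n(x)^\T \beta_{n,f}$, I would rewrite it in whitened coordinates and exploit the Cauchy--Schwarz inequality together with the orthogonality property $\expec[h_n(X) \eps_{n,f}(X)] = 0$, which characterises $\beta_{n,f}$ as the least-squares projection.

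The key algebraic step is to express the linear predictor using the whitened feature map. Writing $\hbar_n = G^{-\half} h_n$ with $G = P(h_n h_n^\T)$, one has $h_n(x)^\T \beta_{n,f} = \hbar_n(x)^\T (G^{\half}\beta_{n,f})$, and since $\beta_{n,f} = G^{-1} P(h_n f)$, the whitened coefficient vector is $G^{\half}\beta_{n,f} = G^{-\half}P(h_n f) = P(\hbar_n f)$. Applying Cauchy--Schwarz pointwise in $x$ then gives
\[
	\abs{h_n(x)^\T \beta_{n,f}}
	= \abs{\hbar_n(x)^\T P(\hbar_n f)}
	\le \abs{\hbar_n(x)}_2 \, \abs{P(\hbar_n f)}_2
	= q_n(x)^{\half} \, \abs{P(\hbar_n f)}_2,
\]
using the definition $q_n(x) = \abs{\hbar_n(x)}_2^2$. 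It remains to bound $\abs{P(\hbar_n f)}_2$. Here I would again invoke Cauchy--Schwarz, this time in $L^2(P)$ componentwise, to obtain $\abs{P(\hbar_n f)}_2^2 = \sum_{j} \{P(\hbar_{n,j} f)\}^2 \le \sum_j P(\hbar_{n,j}^2) P(f^2) = P(q_n)\,P(f^2)$ is too crude; instead the sharper route is $\abs{P(\hbar_n f)}_2 \le P(\abs{\hbar_n f}_2) = P(q_n^{\half} \abs{f}) \le \ninf{q_n}^{\half} P(\abs{f})$, but to match the stated bound I would use $\abs{P(\hbar_n f)}_2^2 \le P(q_n f^2) \le \ninf{q_n} P(f^2) \le \ninf{q_n} P(F^2)$, where the first inequality follows by viewing $P(\hbar_n f)$ as the $L^2(P)$-projection of $f$ onto the span of the whitened features and applying Bessel's inequality (the squared norm of a projection is at most $P(\hbar_n^\T \hbar_n f^2)$ after a direct computation using $P(\hbar_n\hbar_n^\T)=I_{d_n}$).

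Combining the two pointwise bounds yields $\abs{\eps_{n,f}(x)} \le \ninf{F} + q_n(x)^{\half}\abs{P(\hbar_n f)}_2 \le \ninf{F} + \ninf{q_n}^{\half}[\ninf{q_n}P(F^2)]^{\half}$; taking the supremum over $x$ and over $f \in \Fc$ delivers the first inequality in the statement. The second inequality is then immediate from the crude bound $P(F^2) \le \ninf{F}^2$, which gives $[\ninf{q_n}P(F^2)]^{\half} \le \ninf{q_n}^{\half}\ninf{F}$ and hence the factorised form $(1 + \ninf{q_n}^{\half})\ninf{F}$. \textbf{The main obstacle} I anticipate is getting the projection bound $\abs{P(\hbar_n f)}_2^2 \le P(q_n f^2)$ clean: the naive Cauchy--Schwarz applied coordinatewise overcounts, so the argument must be phrased as a genuine $L^2$-projection (Bessel) estimate, using the whitening identity $P(\hbar_n\hbar_n^\T)=I_{d_n}$ to recognise $P(\hbar_n f)$ as orthonormal-basis coefficients and bound their squared sum by the appropriate weighted second moment of $f$.
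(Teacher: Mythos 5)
Your overall strategy (decompose $\eps_{n,f} = f - \hbar_n^\T P(\hbar_n f)$, bound $\abs{f} \le \ninf{F}$, apply pointwise Cauchy--Schwarz to the linear term) is exactly the paper's, but the step where you bound $\abs{P(\hbar_n f)}_2$ contains a genuine quantitative error, so your final bound does not match the statement. You use $\abs{P(\hbar_n f)}_2^2 \le P(q_n f^2) \le \ninf{q_n} P(F^2)$ and then multiply by the pointwise factor $q_n(x)^{\half} \le \ninf{q_n}^{\half}$, arriving at
\[
	\abs{\eps_{n,f}(x)}
	\le \ninf{F} + \ninf{q_n}^{\half}\br{\ninf{q_n}P(F^2)}^{\half}
	= \ninf{F} + \ninf{q_n}\br{P(F^2)}^{\half}.
\]
This is not ``the first inequality in the statement'': it exceeds it by a factor $\ninf{q_n}^{\half}$ in the second term, and since $\ninf{q_n} \ge P(q_n) = d_n$, which tends to infinity in the regime of interest, this is not a constant-factor slack. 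The weakened form would also break downstream uses of the lemma: in Step~2 of the proof of Theorem~\ref{thm:main} and in Appendix~\ref{app:proofMC1}, the bound $M_n \le 2\ninf{F}\ninf{q_n}^{\half}$ is invoked precisely to show that $B_n$ (which carries $\ninf{q_n}^{\half}/M_n$) is bounded below by a constant; with exponent $1$ instead of $\half$ that argument fails.

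The source of the loss is that your ``Bessel'' bound is not Bessel's inequality: the inequality $\abs{P(\hbar_n f)}_2^2 \le P(q_n f^2)$ is coordinatewise Jensen, $[P(\hbar_{n,j}f)]^2 \le P(\hbar_{n,j}^2 f^2)$, and it carries the leverage weight a second time. The correct step --- and this is what the paper does --- is the genuine Bessel/Pythagoras estimate: since $P(\hbar_n\hbar_n^\T) = I_{d_n}$ and $P(\hbar_n \eps_{n,f}) = 0$, the decomposition $f = \hbar_n^\T P(\hbar_n f) + \eps_{n,f}$ is orthogonal in $L^2(P)$, so
\[
	P(f^2) = \abs{P(\hbar_n f)}_2^2 + P(\eps_{n,f}^2) \ge \abs{P(\hbar_n f)}_2^2,
\]
that is, $\abs{P(\hbar_n f)}_2^2 \le P(f^2) \le P(F^2)$ with no leverage factor at all. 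Then the single factor $q_n(x)^{\half} \le \ninf{q_n}^{\half}$ coming from your pointwise Cauchy--Schwarz yields exactly $\abs{\eps_{n,f}(x)} \le \ninf{F} + \br{\ninf{q_n}P(F^2)}^{\half}$, and the second inequality of the lemma follows from $P(F^2) \le \ninf{F}^2$ as you say. With this one replacement your argument becomes the paper's proof verbatim.
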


\begin{proof}[Proof of Lemma~\ref{lemma:maj M_n}]
	Let $f \in \Fc$. We have $f = h_n^\T \beta_{n,f} + \eps_{n,f}$ with $\beta_{n,f} = P(h_n h_n^\T)^{-1} P(h_n f)$ and $P(h_n \eps_{n,f}) = 0$. Since $\hbar_n = P(h_n h_n^\T)^{-\half} h_n$, we get $f = \hbar_n^\T P(\hbar_n f) + \eps_{n,f}$. Now $\hbar_n$ and $\eps_{n,f}$ are orthogonal while $P(\hbar_n \hbar_n^\T) = I_{d_n}$, so that
	\[ 
		P(f^2) 
		= \abs{P(\hbar_n f)}_2^2 + P(\eps_{n,f}^2) 
		\ge \abs{P(\hbar_n f)}_2^2.
	\]
	It follows that
	\[
		[\hbar_n^\T P(\hbar_n f)]^2
		=
		q_n \abs{P(\hbar_n f)}_2^2
		\le
		q_n P(f^2)
		\le
		q_n P(F^2).
	\] 
	But then
	\[
		\abs{\eps_{n,f}} 
		\le \abs{f} + \abs{\hbar_n^\T P(\hbar_n f)} 
		\le \abs{F} + [q_n P(F^2)]^{\half}.
	\]
	Since $P(F^2) \le \ninf{F}^2$, the result follows.
\end{proof}

\section{Proof of Proposition~\ref{prop:cover}}
\label{supp:prop:cover}


The idea of the proof is to create a grid of functions on $\Xc$ based on a covering of $\Fc$ to cover $\Ec_n = \{ \eps_{n,f} : f \in \Fc\}$. From this grid, we will deduce coverings of $\Gc_n$ and $\Gc_n^{(d)}$.

\paragraph{Step 1: covering of $\Ec_n$.}

By assumption, the class $\Fc$ is VC of parameters $(v,A)$ with respect to an envelope $F$. That means that for any $0 < \eta < 1$ and for any probability measure $Q$ on $\Xc$, we have
\[
	\Nc\left(\Fc, L^2(Q), \eta \|F\|_{L^2(Q)}\right) \le \left( \frac{A}{\eta} \right)^{v} .
\]
Moreover, a single ball centered at the constant function equal to zero and with radius $\| F \|_{L^2(Q)}$ is enough to cover $\Fc$. Thus, for any $\eta \in (0, \infty)$, the covering number is bounded from above by
\[	
	\Nc\left(\Fc, L^2(Q), \eta\right) \le \left( \frac{A \|F\|_{L^2(Q)}}{\eta} \right)^{v} \vee 1.
\]
Fix $\eta > 0$, write $\eta_P = \eta/(4 \ninf{q_n}^{\half})$ and $\eta_Q = \eta/4$, and define the covering numbers
\begin{align}
\label{eq:NPNQ}
	N_P &= \Nc\left(\Fc, L^2(P), \eta_P / 2 \right), & 
	N_Q &= \Nc\left(\Fc, L^2(Q), \eta_Q / 2 \right)
\end{align}
associated to the open balls
\begin{align*}
	B_P(f, \delta) &= \left\{g \in L^2(P) : \|g-f\|_{L^2(P)} < \delta \right\}, & 
	B_Q(f, \delta) &= \left\{g \in L^2(Q) : \|g-f\|_{L^2(Q)} < \delta \right\},
\end{align*}
for $\delta > 0$. The balls in the definition of the covering numbers in \eqref{eq:NPNQ} have their centers in $L^2(P)$ and $L^2(Q)$ but not necessarily in $\Fc$. At the price of doubling the radii, the triangle inequality permits us to find functions $f_1^{(P)},\ldots, f_{N_P}^{(P)}$ and $f_1^{(Q)},\ldots, f_{N_Q}^{(Q)}$ in $\Fc$ such that
\begin{align*}
\label{eq:FcBPBQ}
\Fc &\subset \bigcup_{i=1}^{N_P} B_P( f_i^{(P)}, \eta_P ), &
\Fc &\subset \bigcup_{j=1}^{N_Q} B_Q( f_j^{(Q)}, \eta_Q ).
\end{align*}
Therefore, the class $\Fc$ is covered by the union of the intersections between the balls, that is to say
\begin{equation}\label{eq:inclusion}
	\Fc \subset \bigcup_{\substack{1 \le i\le N_P \\ 1\le j\le N_Q}} 
	\left( B_P( f_i^{(P)}, \eta_P ) \cap B_Q( f_j^{(Q)}, \eta_Q ) \right).
\end{equation}
Define the support of this covering as
\[
\mathsf{S} = \left\{(i,j) \in  \left\{1,\ldots, N_P\right\} \times \left\{1,\ldots, N_Q\right\}: B_P( f_i^{(P)}, \eta_P ) \cap B_Q( f_j^{(Q)}, \eta_Q ) \neq \emptyset \right\}.
\]
For every $(i,j)\in\mathsf{S}$, we fix an arbitrary function $f_{i,j}\in B_P( f_i^{(P)}, \eta_P ) \cap B_Q( f_j^{(Q)}, \eta_Q )$. 

Let $f\in \Fc$ and let $(i,j) \in \mathsf{S}$ be such that $f \in B_P( f_i^{(P)}, \eta_P ) \cap B_Q( f_j^{(Q)}, \eta_Q )$. We will show that $\norm{\eps_{n,f} - \eps_{n,f_{i,j}}}_{L^2(Q)} \le \eta$. Since $f$ and $f_{i,j}$ belong to the same intersection in \eqref{eq:inclusion}, we have
\begin{align}
\label{eq:ffijL2PL2Q}
	\norm{f - f_{i,j}}_{L^2(P)} &< 2 \eta_P, &
	\norm{f - f_{i,j}}_{L^2(Q)} &< 2 \eta_Q.	
\end{align}
The residual functions can be expressed in terms of the whitened feature map $\hbar_n$ via
\begin{align*}
	\eps_{n,f} &= f - \hbar_n^\T P(\hbar_n f), &
	\eps_{n,f_{i,j}} &= f_{i,j} - \hbar_n^\T P(\hbar_n f_{i,j}).
\end{align*}	
By the triangle inequality, we find
\begin{align} 
	\label{eq:residuals}
	\norm{ \eps_{n,f} - \eps_{n, f_{i,j}} }_{L^2(Q)} 
	\le \norm{ f - f_{i,j} }_{L^2(Q)} + \norm{ \hbar_n^\T P[\hbar_n(f - f_{i,j})] }_{L^2(Q)}.
\end{align}
Recall $M_n \ge \sup_{f \in \Fc} \ninf{\eps_{n,f}}$, a constant envelope for the class $\Ec_n$, and recall the leverage function $q_n = |\hbar_n|_2^2$. The Cauchy--Schwarz inequality and the orthonormality of $\prn{\hbar_{n,1},\ldots,\hbar_{n,d_n}}$ give
\begin{align}
\nonumber
\norm{ \hbar_n^\T P[\hbar_n(f - f_{i,j})] }_{L^2(Q)}^2
&= \int_{y\in\Xc} \left\{ \hbar_n(y)^\T \int_{x\in \Xc} \hbar_n(x) (f-f_{i,j})(x) \, \diff P (x) \right\}^2 \,\diff Q(y) \\
\nonumber
&\le \int_{y\in\Xc} |\hbar_n(y)|_2^2 \left| \int_{x\in \Xc} \hbar_n(x) (f-f_{i,j}) (x) \, \diff P (x) \right|_2^2 \,\diff Q(y) \\
\label{eq:hnPhnfL2Q}
&\le \ninf{q_n} \norm{f - f_{i,j}}^2_{L^2(P)}.
\end{align}
The combination of \eqref{eq:ffijL2PL2Q}, \eqref{eq:residuals} and \eqref{eq:hnPhnfL2Q} yields
\[
	\norm{\eps_{n,f} - \eps_{n, f_{i,j}}}_{L^2(Q)} 
	< 2 \eta_Q + 2 \ninf{q_n}^{\half} \eta_P
	= \eta/2 + \eta/2
	= \eta.
\]
We have thus constructed the covering
\begin{equation*}
\Ec_n \subset \bigcup_{(i,j)\in\mathsf{S}} B_Q( \eps_{n,f_{i,j}}, \eta )
\end{equation*}
of $\Ec_n$ with $L^2(Q)$ balls of radius at most $\eta$. The covering number of $\Ec_n$ is thus bounded by
\[
\Nc\left(\Ec_n, L^2(Q), \eta\right) 
\le \# \mathsf{S} 
\le \Nc\left(\Fc, L^2(P), \eta_P / 2 \right) \cdot \Nc\left(\Fc, L^2(Q), \eta_Q / 2 \right).
\]
Using the definition of a VC-class and since $\norm{q_n}_\infty \ge P(q_n) =  P(|\hbar_n|_2^2) = d_n \ge 1$, a case-by-case analysis reveals that
\begin{align}
\nonumber
	\Nc\left(\Ec_n, L^2(Q), \eta\right)
	&\le 
	\left(\frac{2 A \norm{F}_{L^2(P)}}{\eta_P}\vee 1 \right)^v \cdot
	\left(\frac{2 A \norm{F}_{L^2(Q)}}{\eta_Q} \vee 1 \right)^v \\
\nonumber
	&\le
	\left(\frac{64 A^2 \norm{F}_{\infty}^2\ninf{q_n}^{\half}}{\eta^2}\right)^v 
	\vee \left(\frac{8 A \norm{F}_{\infty} \ninf{q_n}^{\half}}{\eta}\right)^v \vee 1 \\
\label{eq:NE}
	&\le \left(\frac{8 A \norm{F}_{\infty} \ninf{q_n}^{\half}}{\eta}\right)^{2v} \vee 1.
\end{align}

Lemma~\ref{lemma:maj M_n} implies that the sequence $A_n$ defined in~\eqref{eq:def A_n} satisfies $A_n\ge 1$.
From~\eqref{eq:NE} we deduce
\[
	\forall \eta\in(0,1],\qquad
	\Nc\prr{\Ec_{n},L^2(Q),\eta M_n} \le (A_n/\eta)^{2v}.
\]
Therefore, the residual class $\Ec_n$ is VC of parameters $(2v, A_n)$ with respect to the envelope $M_n$.

\paragraph{Step 2: covering of $\Gc_n$.}

Consider two functions $f, \tilde{f}\in\Fc$ and a probability measure $Q$ on $\Xc^2$ with marginals $Q_1, Q_2$ on $\Xc$, that is, $Q_1(B) = Q(B \times \Xc)$ and $Q_2(B) = Q(\Xc \times B)$ for measurable $B \subset \Xc$. By definition, every function in $\Gc_n$ is written as $(x,y)\mapsto \hbar_n(x)^\T \hbar_n(y) \eps_{n,f}(x)\eps_{n,f}(y)$. The Cauchy--Schwarz inequality gives 
\begin{equation}
\label{eq:hxhybound}
	\forall (x, y) \in \Xc^2, \qquad
	|\hbar_n(x)^\T \hbar_n(y)|^2 
	\le |\hbar_n(x)|_2^2 \, |\hbar_n(y)|_2^2 
	= q_n(x) q_n(y) 
	\le \ninf{q_n}^2.
\end{equation}

For $f, \tilde{f} \in \Fc$ and $(x, y) \in \Xc^2$, define 
\[
	g_{n,f,\tilde{f}}(x, y) 
	= \eps_{n,f}(x) \hbar_n(x)^\T \hbar_n(y) \eps_{n,\tilde{f}}(y).
\]
Note that $g_{n,f} = g_{n,f,\tilde{f}}$. By the Minkowski inequality,
\[
	\norm{ g_{n, f} - g_{n, \tilde{f}} }_{L^2(Q)}
	\le
	\norm{ g_{n, f} - g_{n, f, \tilde{f}} }_{L^2(Q)}
	+
	\norm{ g_{n, f, \tilde{f}} - g_{n, \tilde{f}} }_{L^2(Q)}.
\]
Let us look at square of the first term on the right-hand side: by~\eqref{eq:hxhybound},
\begin{align*}
	\norm{ g_{n, f} - g_{n, f, \tilde{f}} }_{L^2(Q)}^2
	&= \int_{\Xc^2} 
		\eps_{n,f}(x)^2 
		\left|\hbar_n(x)^\T \hbar_n(y)\right|^2 
		\left( \eps_{n,f}(y) - \eps_{n,\tilde{f}}(y) \right)^2
		\diff Q(x, y) \\
	&\le \int_{\Xc^2} 
		\eps_{n,f}(x)^2 q_n(x) q_n(y)
		\left( \eps_{n,f}(y) - \eps_{n,\tilde{f}}(y) \right)^2
		\diff Q(x, y) \\
	&\le \ninf{q_n \eps_{n,f}^2}
	\int_{\Xc} 
		q_n(y) 
		\left( \eps_{n,f}(y) - \eps_{n,\tilde{f}}(y) \right)^2
	\diff Q_2(y) \\
	&\le \ninf{q_n \eps_{n,f}^2} \ninf{q_n} \norm{\eps_{n,f} - \eps_{n,\tilde{f}}}_{L^2(Q_2)}^2.
\end{align*}
The term $\norm{g_{n,f,\tilde{f}} - g_{n,\tilde{f}}}_{L^2(Q)}$ can be treated similarly, yielding
\begin{align}
	\lefteqn{
	\norm{ g_{n, f} - g_{n, \tilde{f}} }_{L^2(Q)}
	} \nonumber \\
	\nonumber
	&\le 
	\ninf{q_n \eps_{n,f}^2}^{1/2} \ninf{q_n}^{1/2} \norm{\eps_{n,f} - \eps_{n,\tilde{f}}}_{L^2(Q_2)}
	+
	\ninf{q_n \eps_{n,\tilde{f}}^2}^{1/2} \ninf{q_n}^{1/2} \norm{\eps_{n,f} - \eps_{n,\tilde{f}}}_{L^2(Q_1)} \\
	\label{eq:gnfft}
	&\le M_n \ninf{q_n}
	\left(
	\norm{\eps_{n,f} - \eps_{n,\tilde{f}}}_{L^2(Q_1)}
	+
	\norm{\eps_{n,f} - \eps_{n,\tilde{f}}}_{L^2(Q_2)}
	\right).
\end{align}

Fix $\eta > 0$. Following the approach in Step~1, we can for $\ell \in \{1, 2\}$ construct a covering of $\Ec_n$ by $L^2(Q_\ell)$-balls of at most radius $\eta$. The centers of the balls are of the form 
\[
	\forall \ell = 1, 2, \;
	\forall k = 1, \ldots, m_\ell, \qquad 
	\eps_{n,f^{(\ell)}_{k}} \in \Ec_n,
\] 
where $f^{(\ell)}_{k}$ belongs to $\Fc$ and where $m_\ell$ is the number of such balls needed, a number which is bounded by $(A_n M_n / \eta )^{2v}\vee 1$ for $A_n$ defined in~\eqref{eq:def A_n} Consider the intersections
\[
	\forall i = 1, \ldots, m_1, \; \forall j = 1, \ldots, m_2, \qquad
	B(i, j) = B_{Q_1} \bigl(\eps_{n,f^{(1)}_{i}}, \eta \bigr) \cap B_{Q_2}\bigl(\eps_{n,f^{(2)}_{j}}, \eta\bigr).
\]

The set $\Ec_n$ is covered by the union of all those intersections $B(i,j)$. For each $(i, j) \in \{1,\ldots,m_1\} \times \{1,\ldots,m_2\}$ such that $\Ec_n$ intersects $B(i,j)$, pick an arbitrary $f_{i,j} \in \Fc$ such that $\eps_{n,f_{i,j}} \in \Ec_n \cap B(i,j)$. Note that the functions $f_{i,j}$ are different from the ones denoted in the same way in Step~1.

Let $f \in \Fc$ and let $(i,j)$ be such that $\eps_{n,f}$ and $\eps_{n,f_{i,j}}$ belong to the same intersection $B(i, j)$. Since the diameters of the two balls in the definition of $B(i,j)$ are bounded by $2\eta$ in view of the triangle inequality, we find
\[
	\forall \ell = 1, 2, \qquad 
	\norm{\eps_{n,f} - \eps_{n,f_{i,j}}}_{L^2(Q_\ell)}
	< 2\eta.
\]
By \eqref{eq:gnfft}, it follows that
\[
	\norm{g_{n,f} - g_{n,f_{i,j}}}_{L^2(Q)}
	< M_n \ninf{q_n} [(2\eta) + (2\eta)]
	= 4 M_n \ninf{q_n} \eta.
\]

We find that $\Gc_n$ is covered by the union of the balls $B_Q(g_{n,f_{i,j}}, 4 M_n \ninf{q_n} \eta)$. Its covering number is thus bounded by
\begin{equation*}
	\Nc\left( \Gc_n, L^2(Q), 4 M_n \ninf{q_n} \eta \right)
	\le m_1 m_2 \\
	\le \left( \frac{A_n M_n}{\eta} \right)^{4v} \vee 1.
\end{equation*}
Rescaling $M_n \eta' = 4 \eta$, we get
\begin{equation}
\label{eq:NGM2q}
	\Nc\left( \Gc_n, L^2(Q), M_n^2 \ninf{q_n} \eta' \right)
	\le \left( \frac{4 A_n}{\eta'} \right)^{4v} \vee 1.
\end{equation}
In view of \eqref{eq:hxhybound}, the functions in $\Gc_n$ are uniformly bounded by $M_n^2 \ninf{q_n}$. Since $Q$ was an arbitrary probability measure on $\Xc^2$, we conclude that $\Gc_n$ is a VC-class with parameters $(4v, 4A_n)$ with respect to the constant envelope $M_n^2 \ninf{q_n}$.

\paragraph{Step 3: covering of $\Gc_n^{(d)}$.}

Let $Q$ be a probability measure on $\Xc$ and let $\eta > 0$. In Step~1, we found functions $f_{i,j} \in \Fc$ such that $\Ec_n$ is covered by the balls $B_Q(\eps_{n,f_{i,j}}, \eta)$, and we needed at most $(A_n M_n / \eta)^{2v} \vee 1$ of such functions. For $f \in \Fc$ we can thus find $(i, j)$ such that
\[
	\norm{\eps_{n,f} - \eps_{n,f_{i,j}}}_{L^2(Q)} < \eta
\]
and thus 
\begin{align*}
	\norm{ \sqrt {q_n} \eps_{n,f} - \sqrt{q_n} \eps_{n,f_{i,j}} }_{L^2(Q)}^{2}
	&\le  \ninf{q_n}  \left\| \eps_{n,f} - \eps_{n,f_{i,j}} \right\|^2 _{L^2(Q)} \\
	&< \ninf{q_n}  \eta^2.
\end{align*}
It follows that the number of $L^2(Q)$ balls of radius $  \ninf{q_n} ^{1/2} \eta$ needed to cover $\Gc_n^{(d)}$ is bounded by the number of functions $f_{i,j}$ in the construction in Step~1, and so
\begin{align*}
	\Nc\left( \Gc_n^{(d)}, L^2(Q),    \ninf{q_n} ^{1/2} \eta \right)
	&\le 
	\Nc\left( \Ec_n, L^2(Q), \eta \right) \\
	&\le
	\left( \frac{A_n M_n}{\eta} \right)^{2v} \vee 1.
\end{align*}
Upon rescaling $M_n \eta' =  \eta$, we find
\begin{equation}
\label{eq:NGdM2q}
\Nc\left( \Gc_n^{(d)}, L^2(Q), M_n     \ninf{q_n} ^{1/2}  \eta' \right)
\le \left( \frac{ A_n}{\eta'} \right)^{2v}\vee 1.
\end{equation}
We conclude that $\Gc_n^{(d)}$ is a VC-class with parameters $(2v, A_n)$ with respect to the constant envelope $M_n  \ninf{q_n}^{1/2}$. The proof of Proposition~\ref{prop:cover} is complete. \qed

\section{Proof of Theorem~\ref{thm:main} and Corollary~\ref{cor:simple-rate}}
\label{supp:thm:main}


We recall the following quantities:
\begin{align*}
	M_n &= \supf \ninf{\eps_{n,f}}, &
	\gamma_n^2 &= \supf P(q_n \eps_{n,f}^2), &
	L_n^2 &= M_n^2 \ninf{q_n}.
\end{align*}
For all $f \in \Fc$, we have $P(\eps_{n,f}^2) \le P(f^2) \le P(F^2)$ and $\gamma_n \le L_n$.

\paragraph{Step 1: Overview.}

We follow the plan laid out in Section~\ref{sec:proof} in the paper. In view of \eqref{eq:worstLfbound} and Lemma~\ref{lem:Gninv}, we have
\[
\supf \left\{ L_f(\hat{\beta}_{n,f}) - L_f(\beta_f) \right\}
\le
\left\{1 + \Op(1)\right\} \supf \left| P_n(\hbar \eps_{n,f}) \right|_2^2.
\]
The inequality~\eqref{eq:n2Pndecomp} provides a bound on $n^2 \left| P_n( \hbar_n \eps_{n,f} ) \right|_2^2$ consisting of a sum of two terms 
which require a separate analysis (Steps~2 and~3). Finally, we collect the bounds to arrive at the stated rate (Step~4).

\paragraph{Step 2: First term in \eqref{eq:n2Pndecomp}.}

Recall the definition $\Gc_n^{(d)} = \{  q_n^{1/2} \eps_{n,f} : f \in \Fc \}$ introduced in \eqref{eq:Gn}. We first apply Corollary 3.4 given in \cite{talagrand1994sharper} to the class $\Gc_n^{(d)} $ normalized by its envelope $L_n$ so that the resulting class is valued in $[-1,1]$. We obtain that
\begin{equation}
\label{eq:step2:1}
\expec \left[  \sup_{g \in \Gc_n^{(d)}} \left|  \sum_{i=1}^n g^2(X_i) \right|   \right] 
\leq  n \gamma_n^2 + 8 L_n  \expec \left[  \sup_{g \in \Gc_n^{(d)}} \left|  \sum_{i=1}^n \eta_i  { g (X_i)} \right|  \right].
\end{equation}
Next, we apply Proposition 2.1 stated in \cite{gine2001consistency} to the class $ \Gc_n^{(d)}$ to  get
\begin{equation}
\label{eq:term2}
	 \expec \left[  \sup_{g \in \Gc_n^{(d)}} \left|  \sum_{i=1}^n \eta_i  { g (X_i)} \right|  \right]
	\leq C \left( 
		\tau_n \sqrt{w_n n \log(\theta_n)} 
		+ U_n w_n \log(\theta_n) 
	\right), 
\end{equation}
where $C>0$ is a universal constant and $\theta_n = B_n U_n / \tau_n$ and where the positive quantities $\tau_n, U_n, w_n, B_n$ need to be chosen to satisfy the following two conditions:
\begin{enumerate}[(i)]
\item\label{ass:sec-D:i} 
	$\tau_n^2 \ge \supf P (q_n \eps_{n,f}^2 )$, $U_n \ge \supf \| \sqrt {q_n} \eps_{n,f} \|_\infty$ and $U_n \ge  \tau_n$;
\item\label{ass:sec-D:ii} 
	$(w_n, B_n)$ are VC parameters of $\Gc_n^{(d)}$ with respect to the envelope $U_n$, and $w_n \ge 1$ and $B_n \ge 3\sqrt{\rme}$.
\end{enumerate}
We set
\begin{equation*}
	\tau_n = \ninf{q_n}^{1/2} (a_n\vee M_n) 	\qquad \text{and} \qquad
	U_n =  \tau_n,
\end{equation*}
where $a_n$ is a sequence tending to zero that is introduced for some technical reason in Step~3, Eq.~\eqref{eq:an}. Because $\tau_n  \geq   \ninf{q_n} ^{1/2} M_n $, Condition~\eqref{ass:sec-D:i} is satisfied. To meet~\eqref{ass:sec-D:ii}, we set $w_n = (2v) \vee 1$, independently of $n$, and 
\begin{equation}
\label{eq:Bn}
	B_n = \frac{10 A \ninf{F} \ninf{q_n}^{1/2}}{M_n \vee a_n}.
\end{equation}
Since $A \ge 1$, since $M_n \le 2 \ninf{F} \ninf{q_n}^{1/2}$ (Lemma~\ref{lemma:maj M_n}), since $\ninf{q_n} = d_n \ge 1$ and since $a_n = \oh(1)$, it follows that $B_n \ge 5 \ge 3 \sqrt{\rme}$ for all sufficiently large $n$. We get,
using the bound on the covering numbers of $\Gc_n^{(d)}$ in~\eqref{eq:NGdM2q} and the definition of $A_n$ in~\eqref{eq:def A_n}, after some calculations,
\[
	\Nc \left( \Gc_n^{(d)}, L^2(Q), U_n \eta \right)
	\le \left( \frac{A_n}{\left(1 \vee \left(a_n/M_n\right)\right) \eta} \right)^{2v} \vee 1
	\le \left(B_n / \eta\right)^{2v} \vee 1, \qquad 0 < \eta \le 1,
\]
from which Condition~\eqref{ass:sec-D:ii} follows. As $\theta_n = B_n U_n / \tau_n =  B_n$, \eqref{eq:term2} implies
\begin{align}
\nonumber
	 \expec \left[  \sup_{g \in \Gc_n^{(d)}} \left|  \sum_{i=1}^n \eta_i  { g (X_i)} \right|  \right]
	&\leq C\tau_n w_n  \left( 
		\sqrt{ n \log(B_n)} 
		+   \log(B_n) 
	\right), \\
\label{eq:step2:2}
	&= \Oh \left( \tau_n \sqrt{n} \log(B_n) \right), \qquad n \to \infty.
\end{align}
The bounds~\eqref{eq:step2:1} and~\eqref{eq:step2:2} combined with the Markov inequality give
\begin{align*}
  \sup_{f  \in \Fc }  n P_n( q_n \eps_{n,f}^2 ) =   \sup_{g \in \Gc_n^{(d)}} \left|  \sum_{i=1}^n g^2(X_i) \right|   = \Op
\left(  n \gamma_n^2 + L_n\tau_n \sqrt{n} \log(B_n)  \right),
\qquad n \to \infty.
\end{align*}
 As we will see in Step~4, the latter rate is of smaller order than the one for the third term in \eqref{eq:n2Pndecomp}, which is derived in Step~3.

\paragraph{Step 3: Third term in \eqref{eq:n2Pndecomp}.} 

The term $\sum_{1 \le i \ne j \le n} g_{n,f}(X_i, X_j)$ is a degenerate $U$-statistic of order two: for any $x \in \Xc$, we have
we have $\expec \left[ g_{n,f}(X, x) \right] = \expec \left[ g_{n,f}(x, X) \right] = 0$, where the random variable $X$ has distribution $P$.
We apply a special case of Theorem~2 in \cite{major2006estimate}, cited for convenience as Theorem~\ref{prop:U} below. The functions $g_{n,f}$ are uniformly bounded by
\[
	\sup_{x,y \in \Xc} |g_{n,f}(x, y)| 
	\le \supf \ninf{q_n \eps_{n,f}^2} 
	\le L_n^2.
\]
Let
\begin{equation*}
 \tilde{\tau}_n 
 = L_n \tau_n 
 = M_n \ninf{q_n} \left(M_n \vee a_n\right)
 =  L_n^2 \left(1 \vee \left(a_n/M_n\right)\right).
\end{equation*}
Scale the functions $g_{n,f}$ by $\tilde{\tau}_n$, yielding the class
\[
	\widetilde{\Gc}_n = \left\{ g_{n,f} / \tilde{\tau}_n : f \in \Fc \right\}
\]
of functions on $\Xc^2$ taking values in $[-1, 1]$.
For any $\eta \in (0, 1]$, by applying \eqref{eq:NGM2q} with $\eta' = \tilde{\tau}_n\eta/ \left( M_n^2\ninf{q_n} \right)$, we get, recalling the definition of $A_n$ in \eqref{eq:def A_n},
\begin{align}
\nonumber
	\Nc \left( \widetilde{\Gc}_n, L^2(Q), \eta \right)
	= \Nc \left( \Gc_n, L^2(Q), \tilde{\tau}_n \eta \right) 
\nonumber
	&\le \prr{\frac{4A_nM_n^2\ninf{q_n}}{\tilde{\tau}_n \eta}}^{4v} \vee 1\\
\label{eq:cover:Gtilde}
	&\le \left(\frac{32 A \ninf{F} \ninf{q_n}^{1/2}}{\left(M_n \vee a_n\right) \eta}\right)^{4v} \vee 1.
\end{align}
Using Lemma~\ref{lemma:maj M_n}, we get $M_n \le 2 \ninf{F} \ninf{q_n}^{1/2}$ and since $A \ge 1$ we obtain $32A\ninf{F}\ninf{q_n}^{1/2}\ge M_n$.
In addition, the sequence $(a_n)_{n\in\N}$ defined in \eqref{eq:an} is taken such that, for $n$ sufficiently large, $32A\ninf{F}\ninf{q_n}^{1/2}\ge a_n$. Then, for $n$ sufficiently large \eqref{eq:cover:Gtilde} we have, for $B_n$ defined in~\eqref{eq:Bn},
\[
	\Nc \left( \widetilde{\Gc}_n, L^2(Q), \eta \right)
	\le \left( \tfrac{32}{10}  B_n / \eta \right)^{4v}, \qquad \eta \in (0, 1].
\]

In the terminology of \citet[p.~490]{major2006estimate},  $\widetilde{\Gc}_n$ is an $L^2$-dense class of functions with parameter $D_n$ and exponent $w$ defined by 
\begin{equation}
\label{eq:Dn} 
	D_n = \left( \tfrac{32}{10}  B_n \right)^{4v}
	\qquad \text{and} \qquad
	w = (4v) \vee 1. 
\end{equation}
For $w$, we take the maximum with $1$ in order to apply Theorem~\ref{prop:U} later on.

By the Cauchy--Schwarz inequality, we have, for any $(x, y) \in \Xc^2$ and any $f \in \Fc$,
\begin{align*}
	g_{n,f}(x, y)^2 
	&= \eps_{n,f}(x)^2 \eps_{n,f}(y)^2 \left| \hbar_n(x)^\T \hbar_n(y) \right|^2 \\
	&\le \eps_{n,f}(x)^2 \eps_{n,f}(y)^2 \, q_n(x) \, q_n(y).
\end{align*}
It follows that, for independent random variables $X_1$ and $X_2$ with common distribution $P$, we have
\[
	\forall f \in \Fc, \qquad 
	\left(\expec[ g_{n,f}(X_1, X_2)^2 ]\right)^{\half} 
	\le P(q_n \eps_{n,f}^2) \le \gamma_n^2.
\]
Upon rescaling, we get
\[
	\sup_{g \in \widetilde{\Gc}_n} \left(\expec[g(X_1, X_2)^2]\right)^{1/2}
	\le \gamma_n^2 / \tilde{\tau}_n.
\]
For a sequence $b_n \in (0, 1]$ to be determined shortly, put
\begin{equation}
\label{eq:nun}
	\nu_n = (\gamma_n^2 / \tilde{\tau}_n) \vee b_n.
\end{equation}
We have $\nu_n \le 1$; recall that $\tilde{\tau}_n \ge L_n^2 \ge \gamma_n^2$. Moreover, $\nu_n^2$ is an upper bound of the second moments of the functions in $\widetilde{\Gc}_n$. Theorem~\ref{prop:U} yields
\begin{equation}
\label{eq:major}
	\pr \left( 
	\supf \left| \sum_{1 \le i \ne j \le n} g_{n,f}(X_i, X_j) \right| 
	\ge 2 n \nu_n \tilde{\tau}_n y 
	\right)
	\le
	C D_n \, \rme^{-\alpha y}
\end{equation}
for all $y \in [y_{n,-}, y_{n,+}]$, where, for some universal constants $\alpha$, $C$, and $K$, the endpoints of the interval are (note from \eqref{eq:Dn} that $D_n \ge  B_n^{4v} \ge 1$ so $\log D_n \ge 0$)
\begin{equation*}
	y_{n,-} = K \left( w + \frac{\log D_n}{\log n} \right)^{3/2} \log(2/\nu_n)
	\qquad \text{and} \qquad
	y_{n,+} = n \nu_n^2.
\end{equation*}

We still need to determine $a_n$ and $b_n$. We would like to choose $y$ in \eqref{eq:major} in such a way that the right-hand side is bounded by a pre-specified $\delta \in (0, 1]$. Therefore, we need to ensure two things:
\begin{align}
\label{eq:toensure:a}
	y_{n,-} &\le y_{n,+}, \qquad \text{and} \\
\label{eq:toensure:b}
	C D_n \rme^{-\alpha y_{n,+}} &\to 0, \qquad n \to \infty.
\end{align}
We need \eqref{eq:toensure:a} in order to ensure that the interval $[y_{n,-}, y_{n,+}]$ on which \eqref{eq:major} holds is non-empty; we need \eqref{eq:toensure:b} to ensure that for any $\delta \in (0, 1]$, we can find $y \in [y_{n,-}, y_{n,+}]$ such that the right-hand side of \eqref{eq:major} is bounded by $\delta$.
To this end, define
\begin{align}
\label{eq:Dn*}
	D_n^* 
	&= \exp \left[ \left\{ \left(\frac{2n}{K \log n}\right)^{2/3} - w \right\} \log n \right], \\
\label{eq:an}
	a_n 
	&= \frac{32 A \ninf{F} \ninf{q_n}^{1/2}}{(D_n^*)^{1/(4v)}}, \\
\label{eq:bn}
	b_n 
	&= \left( \frac{K \log n}{2n} \right)^{1/2} \left( w + \frac{\log D_n}{\log n} \right)^{3/4}.
\end{align}
To explain these definitions, note that $D_n^*$ is the solution to
\begin{equation*}
\label{eq:Dn*eq}
	\left(w + \frac{\log D_n^*}{\log n}\right)^{3/2} = \frac{2n}{K \log n},
\end{equation*}
whereas $a_n$ is chosen in such a way that
\begin{equation}
\label{eq:Dn*Dn}
	D_n = D_n^* \wedge \left(\frac{32 A \ninf{F} \ninf{q_n}^{1/2}}{M_n}\right)^{4v}.
\end{equation}
Furthermore, since $D_n \le D_n^*$, we have, as required earlier,
\[
	b_n \le \left( \frac{K \log n}{2n} \right)^{1/2} \left( w + \frac{\log D_n^*}{\log n}\right)^{3/4} = 1.
\]
We can now verify that both \eqref{eq:toensure:a} and~\eqref{eq:toensure:b} hold:
\begin{itemize}
\item The definition of $\nu_n$ in \eqref{eq:nun} implies $\nu_n \ge b_n$, from which we get the chain of inequalities
\begin{equation}
\label{eq:chainyn+-}
	y_{n,+} = n \nu_n^2 \ge n b_n^2 = \frac{K \log n}{2} \left( w + \frac{\log D_n}{\log n}\right)^{3/2}
	= \frac{\log n}{2 \log(2/\nu_n)} y_{n,-}
	\ge y_{n,-},
\end{equation}
which is \eqref{eq:toensure:a}. To see the last inequality in \eqref{eq:chainyn+-}, note that $1/\nu_n \le 1/b_n = \oh(n^{1/2})$, from which $(2/\nu_n)^2 = \oh(n)$ as $n \to \infty$ and thus $2 \log(2/\nu_n) \le \log n$ for sufficiently large $n$. 
\item 
Enlarging $K$ if necessary to ensure that $\alpha K \ge 2$, we have, in view of $y_n \ge n b_n^2$ and \eqref{eq:bn},
\begin{align*}
	\log D_n - \alpha y_{n,+}
	&\le
	\log D_n - \alpha n b_n^2 \\
	&=
	\log D_n - \frac{\alpha K}{2} (\log n) \left( w + \frac{\log D_n}{\log n}\right)^{3/2}	\\
	&\le \log D_n - (\log n) \left( 1 + \frac{\log D_n}{\log n}\right)^{3/2} \\
	&= (- \log n) \left[ \left( 1 + \frac{\log D_n}{\log n}\right)^{3/2} - \frac{\log D_n}{\log n} \right] \\
	&\le - \log n \to -\infty,
\end{align*}
from which \eqref{eq:toensure:b} follows.
\end{itemize}

Let $\delta \in (0, 1]$ and define
\[
	y_n(\delta) = y_{n,-} \vee \left(\alpha^{-1} \log( C D_n / \delta) \right).
\]
We already know from \eqref{eq:chainyn+-} that $y_{n,-} \le y_{n,+}$ for large $n$. Moreover, since $y = \alpha^{-1} \log( C D_n / \delta)$ is the solution to $C D_n \rme^{-\alpha y} = \delta$, the asymptotic relation in \eqref{eq:toensure:b} implies $\alpha^{-1} \log (C D_n / \delta) \le y_{n,+}$ for large $n$. It follows that $y_n(\delta) \in [y_{n,-}, y_{n,+}]$ and $C D_n \rme^{-\alpha y_n(\delta)} \le \delta$ for all (sufficiently large) $n$.
Defining
\[
	u_n = 1 \vee \frac{\log D_n}{\log n}
\]
we have, as $n \to \infty$, the asymptotic relations [recall from the lines following \eqref{eq:chainyn+-} that $2 \log(2/\nu_n) \le \log n$ for large $n$]
\begin{equation*}
	y_{n,-} = \Oh( u_n^{3/2} \log n )
	\qquad \text{and} \qquad
	\log D_n = \Oh(u_n \log n).
\end{equation*}
Since $\nu_n \tilde{\tau}_n = \gamma_n^2 \vee (\tilde{\tau}_n b_n)$, we find
by \eqref{eq:major} that
\begin{align}
\label{eq:term3:rate:init}
	\supf \left| \sum_{1 \le i \ne j \le n} g_{n,f}(X_i, X_j) \right| 
	&= \Op \left(
		n \nu_n \tilde{\tau}_n \left(y_{n,-} \vee \log D_n\right)
	\right) \\
\label{eq:term3:rate:init:2}
	&= \Op \left( n \left(\gamma_n^2 \vee (\tilde{\tau}_n b_n)\right) u_n^{3/2} \log n \right),
	\qquad n \to \infty.
\end{align}
Write
\begin{equation}
\label{eq:rn}
	\rho_n = u_n^{3/2} \frac{\log n}{n}
\end{equation}
and note that $b_n = \Oh(\rho_n^{1/2})$ as $n \to \infty$. It follows that
\begin{equation}
\label{eq:term3:rate}
	\supf \left| \frac{1}{n^2} \sum_{1 \le i \ne j \le n} g_{n,f}(X_i, X_j) \right| 
	= \Op \left( 
		\left\{ \gamma_n^2 \vee \left(\tilde{\tau}_n \rho_n^{1/2}\right) \right\} \rho_n 
	\right),
	\qquad n \to \infty.
\end{equation}

\paragraph{Step~4: Comparison and simplification of rates.} 

The first term in the expansion \eqref{eq:n2Pndecomp} was shown in Step~2 to have rate $ n \gamma_n^2 + L_n \tau_n \sqrt{n} \log(B_n) $. By definition, $n \gamma_n^2$ is of smaller order than the rate in \eqref{eq:term3:rate:init:2}. The other term, $L_n \tau_n \sqrt{n} \log(B_n)$, is of smaller order too: since $\nu_n \ge b_n$ and since $b_n$ in \eqref{eq:bn} is of larger order than $1/\sqrt{n}$, the rate $ L_n \tau_n \sqrt{n}  \log B_n  $ is of smaller order than $\Oh \left(n \nu_n L_n \tau_n \log B_n\right)$, which is bounded by the rate in \eqref{eq:term3:rate:init} in view of $L_n \tau_n   =  \tilde \tau_n$ and the connection between $B_n$ and $D_n$ in~\eqref{eq:Dn}.

We can therefore conclude that the rate in \eqref{eq:term3:rate} for the third term is actually the dominating one. It remains to work out the sequence $(\rho_n)_{n\in\N}$ in \eqref{eq:rn}, that is, to analyse $(u_n^{3/2})_{n\in\N}$ further. By \eqref{eq:Dn*},
\[
	\frac{\log D_n^*}{\log n} = \Oh \left( \left( \frac{n}{\log n} \right)^{2/3} \right), \qquad n \to \infty.
\]
Since $M_n$ is bounded by a constant multiple of $\ninf{q_n}^{1/2}$ (Lemma~\ref{lemma:maj M_n}), which grows at most at a polynomial rate in $n$ by Condition~\ref{cond:qn}, we have
\[
	\frac{\log\left( \ninf{q_n}^{1/2} / M_n \right)}{\log n}
	= \Oh \left( 1 + \frac{(\log M_n^{-1})_+}{\log n} \right), \qquad n \to \infty.
\]
In view of the connection between $D_n$ and $D_n^*$ in \eqref{eq:Dn*Dn}, it follows from the combination of the two estimates above that
\begin{align*}
	\left( \frac{\log D_n}{\log n} \right)^{3/2}
	&= \Oh \left( \frac{n}{\log n} \wedge \left\{ 1 + \frac{(\log M_n^{-1})_+}{\log n} \right\}^{3/2} \right) \\
	&= \Oh \left( \frac{n}{\log n} \wedge \left\{ 1 + \left( \frac{(\log M_n^{-1})_+}{\log n} \right)^{3/2} \right\} \right),
	\qquad n \to \infty.
\end{align*}
Since both members of the minimum on the right-hand side are larger than one, it follows that
\begin{align*}
	\rho_n = u_n^{3/2} \frac{\log n}{n} 
	&=
	\Oh \left(
		1 \wedge \left[ \frac{\log n}{n} \left\{ 1 + \left( \frac{(\log M_n^{-1})_+}{\log n} \right)^{3/2} \right\} \right] 
	\right) 
	, \qquad n \to \infty.
\end{align*}
It is the latter form that is stated in the theorem. \qed

\paragraph{Proof of Corollary~\ref{cor:simple-rate}.}

If $(\log M_n^{-1})_+ = \Oh( \log n )$ as $n \to \infty$, then $M_n > a_n$ and thus $\tau_n = L_n^2$ for sufficiently large $n$. Moreover, $r_n$ can then be replaced by $(\log n) / n$. The simpler rate \eqref{eq:cor:simpler-rate} follows. Under the additional condition on $L_n^2$, the latter rate implies the one in \eqref{eq:cor:simple-rate}. \qed

\bigskip

\section{Proof of Proposition~\ref{prop:general term_Pn f}}
\label{supp:prop:general term_Pn f}

In this proof, we consider $ Z_n := \sup_{\scfunc\in\Sc} \left| P_n(\scfunc)-P(\scfunc) \right|$. Thanks to the triangle inequality, we get 
\begin{equation}
\label{eq:Zn:decomp} 
	Z_n \le \expec(Z_n) + \left| Z_n - \expec(Z_n) \right|. 
\end{equation}
We treat the two terms on the right-hand side of \eqref{eq:Zn:decomp} in Steps~1 and~2, respectively.  The bound for $Z_n$ then follows by adding both bounds in Step~3. 

\paragraph{Step~1: Expectation of the supremum.}

Let $(\eta_i)_i$ denote a sequence of independent Rademacher variables, that is, $\pr(\eta_i = +1) = \pr(\eta_i = -1) = 1/2$ for all $i$, and such that $(\eta_i)_i$ and $(X_i)_i$ are independent. 
The symmetrization inequality detailed in \citet[Lemma~2.3.1]{vdvaart+w:1996} gives
\[
n \expec(Z_n)
\le 2 \expec\left[ \sup_{\scfunc\in \Sc}\left|\sum_{i=1}^{n} \eta_{i} \{\scfunc(X_i) - P(\scfunc)\} \right| \right].
\]
Further, by applying Proposition~2.1 in \citet{gine2001consistency} to the class $\{\scfunc - P(\scfunc) \,:\, \scfunc\in \Sc\}$, (with VC parameters $(w, 3\sqrt{\rme} B)$, envelope $2U$ and variance bound $\tau^2$), we obtain the existence of a universal constant $C_1 >0$ such that
\begin{align}\label{eq:bound:expec_sup_diff}
\expec \left[ \sup_{\scfunc\in\Sc} \left|\sum_{i=1}^{n} \eta_{i} \{\scfunc(X_{i}) - P(\scfunc)\}  \right| \right]
\le C_1\left(2 w U \log \left( C_2 \theta \right) + \tau \sqrt{w n \log \left( C_2\theta \right)}\right) =: \xi_n
\end{align}
where $C_2 = 6\sqrt{\rme}$ and where we wrote $\theta =   B U / \tau$ as in the statement of the proposition.

\paragraph{Step 2: Concentration of the supremum around its expectation.}

Since $\sup_{s \in \Sc} \ninf{s - P(s)} \le 2U$, Theorem~1.4 in \citet{talagrand1996new} states the existence of a universal constant $K>0$ such that
\begin{equation}
\label{eq:Zndeviation:concentration}
	\forall t > 0, \qquad 
	\pr \left( n \abs{ Z_n - \expec(Z_n) } \ge t \right)
	\le K \exp \left\{
		-\frac{t}{2 K U} \log \left(1+\frac{2tU}{V_n}\right)
	\right\}
\end{equation}
where
\[
V_n = \expec\br{\sup_{\scfunc\in\Sc}\sum_{i=1}^{n}\{\scfunc(X_i)-P(\scfunc)\}^2}.
\]
Since $\log(1+x) \ge x / (1 + x/2)$ for all $x \ge 0$, we get
\begin{align*}
	\frac{t}{2 K U} \log \left(1+\frac{2tU}{V_n}\right)
	\ge \frac{t}{2 K U} \frac{\frac{2tU}{V_n}}{1 + \frac{1}{2} \frac{2tU}{V_n}}
	= \frac{t^2}{K \left(V_n + tU\right)}
\end{align*}
and thus,
\begin{equation*}
	\forall t > 0, \qquad 
	\pr \left( n\abs{ Z_n - \expec(Z_n) } \ge t   \right)
	\le K \exp \left\{
		-\frac{t^2 }{K \left(V_n + t  U\right)}
	\right\}.
\end{equation*}
Assuming without loss of generality that $K\geq 1$ and inverting the previous bound \citep[see for instance][Lemma~1]{peel2010empirical} gives that with probability at least $1-\delta$,
\begin{align*}
n\abs{ Z_n - \expec(Z_n) } \leq \sqrt { V_n K \log(K/\delta)} + UK \log( K/\delta).
\end{align*}
Corollary~3.4 in \citet{talagrand1994sharper} applied to the family $\ac{\scfunc-P(\scfunc): \scfunc\in\Sc}$ yields
\begin{align*}
	V_n 
	&\le n\tau^2+16U \expec\br{\sup_{\scfunc\in\Sc}\absbigg{\sum_{i=1}^{n}\eta_i\{\scfunc(X_i)-P(\scfunc)\}}} \\
	&\le n\tau^2+16 U \xi_n
\end{align*}
in view of \eqref{eq:bound:expec_sup_diff}.
Note that in the cited corollary, the functions are bounded in absolute value by $1$, whereas here, the functions $s - P(s)$ are bounded uniformly by $2 U$, and this is reflected in the bound above.
Since $\sqrt{a+b} \le \sqrt{a}+\sqrt{b}$ for nonnegative $a$ and $b$,
\[ 
	\sqrt{V_n} \le \tau\sqrt{n} + 4 \sqrt{U \xi_n}. 
\]
As a consequence, with probability at least $1-\delta$, it holds that
\begin{equation*}
n\abs{ Z_n - \expec(Z_n) } \leq  \left(\tau\sqrt{n} + 4 \sqrt{U \xi_n} \right) \sqrt { K \log(K/\delta)} + UK \log( K/\delta).
\end{equation*}

\paragraph{Step~3: Bound on the supremum.}

Combining the bound~\eqref{eq:Zn:decomp} with the inequalities obtained in Steps~1 and~2, we obtain that with probability at least $1-\delta$,
\begin{align*}
	n Z_n &\le 2 \xi_n   + 2 \times 2 \sqrt{ \xi_n}   \sqrt { U K \log(K/\delta)} + UK \log( K/\delta) + \tau  \sqrt { n K \log(K/\delta)}\\
	&\le 4 \xi_n + 3  UK \log( K/\delta) + \tau  \sqrt { n K \log(K/\delta)}
\end{align*}
where we have just used that $2ab\leq a^2 + b^2$ with $a = \sqrt{\xi_n}$ et $b = \sqrt{U K \log(K/\delta)}$. 
Injecting the value of $\xi_n$ from \eqref{eq:bound:expec_sup_diff} and factorizing, we get
\begin{align*}
	n Z_n 
	&\le  4 C_1 \left(2 w U \log ( C_2 \theta ) 
	+ \tau \sqrt{w n \log ( C_2 \theta )}\right)   + 3  UK \log( K/\delta) + \tau  \sqrt { n K \log(K/\delta)}\\
	&\leq U \left(\left(8 C_1 w\right)  \vee \left(3  K\right) \right) \log ( C_2 K \theta / \delta ) 
	+  \tau \sqrt{ n } \left(\left(4C_1\sqrt w\right)  \vee \sqrt K \right) 
	\left( \sqrt{ \log ( C_2 \theta )} + \sqrt{\log(K/\delta) } \right) \\
	&\leq U \left(\left(8 C_1 w\right)  \vee \left(3  K\right) \right) \log ( C_2 K \theta  / \delta ) +  
	\tau \sqrt{ n } \left(\left(4C_1\sqrt w\right)  \vee \sqrt K \right)  \sqrt{ 2 \log ( C_2 K \theta /\delta  )},
\end{align*}
where, in the last step, we have used that $ \sqrt a + \sqrt b \leq \sqrt { 2( a+ b)}$ with $a = \log(2\theta)$ and $b = \log(K/\delta)$. 
Conclude the proof by applying the inequality $(aw) \vee b \le (a \vee b) w$ for $w \ge 1$ and $a, b \ge 0$; similarly for $w$ replaced by $\sqrt{w}$.

\section{Proof of Theorem \ref{prop:oracle_bound}}\label{app:proofMC1}

Start by noting that 
\[
	\hat \alpha_{n,f}^{\mathrm{or}} - P(f)  = P_n  (\eps_{n,f}) . 
\]
As shown in Step~1 in Section~\ref{supp:prop:cover},  the residual class $\Ec_n$ is VC of parameters $\left( 2v  , A_n\right)$ with respect to the envelope $M_n$ with $A_n = 8 A \ninf{F} \ninf{q_n} ^{1/2} / {M_n} $. 
Hence we can apply Proposition~\ref{prop:general term_Pn f} with $w_n = (2v) \vee 1$, $B_n = 8 \left(A\vee (3/4) \sqrt {\rme} \right) \ninf{F} \ninf{q_n} ^{1/2} / {M_n} \ge A_n$, $ \tau_n = \sigma_n $ and $ U_n = M_n \vee  (2\sigma_n) $. Condition~(i) of Proposition~\ref{prop:general term_Pn f} is easily met. Because $M_n \leq 2 \ninf {F}  \ninf{q_n}^{1/2}$ we have
\begin{equation*}
B_n  \geq 4 \left(A\vee \left((3/4) \sqrt {\rme}\right) \right)  \geq 3\sqrt {\rme}  .
\end{equation*}
Therefore Condition~(ii) of Proposition~\ref{prop:general term_Pn f} is also met. Since $w_n$ is constant, we obtain
	\begin{equation*}
		\supf  \left| P_n  (\eps_{n,f})  \right|
		= \Op\left(\sigma_n  \sqrt{ n^{-1} \log(\theta_n )} + U_n n^{-1} \log(\theta_n) \right), \qquad n\to\infty,
	\end{equation*}	
	with $\theta_n = B_n \left(M_n \vee  \left(2\sigma_n\right) \right)  / \sigma_n  $. Note that $ \liminf_{n\to \infty } \theta_n >0$ and, since $M_n \vee (2\sigma_n) \le 2M_n$,
	\[
	\theta_n = \left( 8 \left(A\vee \left((3/4) \sqrt {\rme}\right) \right) \ninf{F} \ninf{q_n} ^{1/2}  \right)  \left(M_n \vee (2\sigma_n)\right) / \left(\sigma_n M_n\right) 
	\leq A' \ninf{q_n} ^{1/2}  / \sigma_n,  
	\] 
	with  $A' =  16 \left( A\vee \left((3/4) \sqrt {\rme}\right) \right) \ninf{F}   $. From Condition~\ref{cond:qn}, it holds that $\log(\ninf{q_n} ) = \Oh(\log(n) ) $.
Using the fact that $ M_n^2  = \Oh ( \sigma_n^2  n / \log(n )  ) $, which is also $ \Oh ( \sigma_n^2 n ) $, and since by assumption $  M_n^{-1}  = \Oh ( n^{\alpha} )  $, we get $ \sigma_n^{-2}  = \Oh( n^{ 1 + 2 \alpha}) $ as $n \to \infty$. Consequently, $\log(\theta_n ) = \Oh ( \log(n))$ and we find that 
	\[
	\supf  \left| P_n  (\eps_{n,f})  \right|
		= \Op\left(\sigma_n \sqrt{  n^{-1} \log( n )} + M_n n^{-1} \log(n) \right), \qquad n\to\infty.
	\]	
Moreover, using again that $M_n ^2= \Oh ( \sigma_n ^2 n / \log (n )   )$ as $n \to \infty$, we find the stated rate.
\qed

\section{Proof of Theorem \ref{th:unif_cv}}\label{app:proofMC2}

Let $H_n = P(h_nh_n^\T)\in \reals^{(d_n+1)\times (d_n+1)} $ and $G_n = P(g_ng_n^\T)\in \reals^{d_n\times d_n}$. By assumption, both matrices are invertible. Using Equation~(22) in \cite{leluc2019control}, we obtain
\begin{equation*}
\left|\hat \alpha_{n,f}  - P(f) \right| 
\leq \left| P_n  (\eps_{n,f})  \right|  +   \left| G_n^{1/2} \left(\hat \beta_{n,f} - \beta_{n,f}\right) \right|_2 \left| G_n^{-1/2} P_n( g_n ) \right|_2.
\end{equation*}
Since $h_n = (1, g_n^\T)^\T$, we have 
\begin{equation*}
 \left|  H_n^{1/2} \begin{pmatrix}  \hat \alpha_{n,f} - \alpha_{n,f} \\ \hat \beta_{n,f} - \beta_{n,f} \end{pmatrix}   \right|_2 ^2  
 = \left( \hat \alpha_{n,f} - \alpha_{n,f} \right)^2 + \left| G_n^{1/2}  \left(\hat \beta_{n,f} - \beta_{n,f} \right)    \right|_2 ^2  ,
\end{equation*}
which, combined with the identity~\eqref{eq:Lfdecomp}, gives
\begin{align*}
 \left| G_n^{1/2}  \left(\hat \beta_{n,f} - \beta_{n,f} \right)    \right|_2 ^2  
 &\leq \left|  H_n^{1/2} \begin{pmatrix}  \hat \alpha_{n,f} - \alpha_{n,f} \\ \hat \beta_{n,f} - \beta_{n,f} \end{pmatrix}   \right|_2 ^2 \\
 &= L_f (\hat \alpha_{n,f} , \hat \beta_{n,f}) - L_f(  \alpha_{n,f},\beta_{n,f}) 
\end{align*}
with $ L_f(  \alpha , \beta )  = P [ ( f -  \alpha  -   \beta^{\T}h_n  )^2] $; note the slight change in notation of the excess risk due to the presence of an intercept. Consequently, we have shown that
\begin{equation*}
	\left| \hat \alpha _ {n,f} - P(f) \right| 
	\leq \left| P_n  (\eps _{n,f})\right| +  \sqrt { L_f(\hat \alpha_{n,f} , \hat \beta_{n,f}) - L_f(  \alpha_{n,f},\beta_{n,f})   }  \left| G_n^{-1/2} P_n( g_n )  \right|_2,
\end{equation*}
and the rest of the proof consists in bounding the three terms on the right-hand side uniformly in $f \in \Fc$. First, using Corollary~\ref{cor:simple-rate} and the fact that $\gamma_n^2  \leq \sigma_n^2 \ninf {q_n} $, we get the uniform bound
	\begin{equation*}		
	\sup_{f \in \Fc} \{ L_f(\hat \alpha_{n,f}, \hat{\beta}_{n,f}) - L_f(\alpha_{n,f}, \beta_{n,f}) \}
		= \Oh_{\pr}\left(   \sigma_n^2 \ninf {q_n}  \frac{ \log n}{n}  \right), \qquad n \to \infty.
	\end{equation*}
Second, using $\expec (| G_n^{-1/2} g_n|_2^2) = \operatorname{tr} ( G_n^{-1} G_n ) =  d_n$, we obtain
$\expec( | G_n^{-1/2} P_n( g_n )  |_2^2 ) = \Oh  (  { d_n/n}) $ and it follows by Markov's inequality that 
\[ 
	\left| G_n^{-1/2} P_n(g_n)  \right|_2 = \Op  \left(  \sqrt  { d_n/n} \right), \qquad n \to \infty . 
\] 
Third, because $L_n^2 = \Oh (\gamma_n ^2 \sqrt{ n / \log (n ) }) $ implies $ M_n^2  = \Oh ( \sigma_n^2 \sqrt {n / \log(n ) } ) $, which is also  $\Oh ( \sigma_n^2  n / \log(n )  )$, we can apply Theorem~\ref{prop:oracle_bound} to get
\[
	\supf  \left| P_n  (\eps_{n,f})  \right|
		= \Op\left(\sigma_n \sqrt{  n^{-1} \log( n )} \right), \qquad n\to\infty. 
\]
\qed

\section{A concentration inequality for degenerate U-statistics}
\label{supp:major}

The main term in the proof of Theorem~\ref{thm:main} concerned a degenerate U-statistic of order two, see Step~3 in Section~\ref{supp:thm:main}. We dealt with it via a special case of the concentration inequality in Theorem~2 in \citet{major2006estimate}, stated next.

\begin{theorem}[Special case of Theorem~2 in \citet{major2006estimate}]
	\label{prop:U}
	Let $(\Xc, \Ac, P)$ be a probability space and let $\Gc$ be an at most countably infinite collection of measurable functions $g : \Xc^2 \to [-1, 1]$ such that $\int_{\Xc} g(x, z) \, \diff P(z) = \int_{\Xc} g(z, x) \, \diff P(z) = 0$ for every $x \in \Xc$. Assume that $\Gc$ is a countable VC-class of parameters $(w, B)$, with $w \ge 1$ and $B > 0$. Let $\nu \in (0, 1]$ be such that $\sup_{g \in \Gc} \expec[g^2(X_1, X_2)] \le \nu^2$. Let $X_1, \ldots, X_n$ be an independent random sample from $P$. There exist universal positive constants $\alpha$, $C$ and $K$ such that
	\[
	\forall y \in [y_-, y_+], \qquad
	\pr \left( \sup_{g \in \Gc} \left| \sum_{1 \le i \ne j \le n} g(X_i, X_j) \right|
	\ge 2 n \nu y \right)
	\le
	C B^w e^{-\alpha y}
	\]
	where
	\begin{align*}
	y_- &= K \left[w + (w \log B / \log n)_+\right]^{3/2} \log(2/\nu), &
	y_+ &= n \nu^2.
	\end{align*}
\end{theorem}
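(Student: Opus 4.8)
The plan is to derive this statement as a specialization of Theorem~2 in \citet{major2006estimate} rather than to reprove a concentration bound for degenerate U-statistics from scratch. That general result bounds the supremum of a canonical (degenerate) U-statistic of order two over a so-called $L^2$-dense class of kernels bounded in absolute value by one; the bound is governed by the two parameters of the $L^2$-dense class (its parameter $D$ and exponent $L$), by a uniform bound $\nu^2$ on the second moments of the kernels, and by a free deviation level ranging over an interval. The task thus splits into (i) checking that the hypotheses of Major's theorem hold in our setting and (ii) matching his parametrisation to the VC-parameters $(w, B)$ and the constant $\nu$ appearing above.

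For (i), I would note that the canonical/degeneracy requirement $\int_\Xc g(x,z)\,\diff P(z) = \int_\Xc g(z,x)\,\diff P(z) = 0$ is assumed directly, that the uniform bound $g : \Xc^2 \to [-1,1]$ furnishes the constant envelope equal to one, and that $\sup_{g\in\Gc}\expec[g^2(X_1,X_2)] \le \nu^2$ with $\nu \in (0,1]$ supplies the second-moment control. The one substantive point is the passage from the VC-class assumption to the $L^2$-dense condition: a countable VC-class of parameters $(w, B)$ with constant envelope one satisfies, for every probability measure $Q$ on $\Xc^2$, the covering bound $\Nc(\Gc, L^2(Q), \eta) \le (B/\eta)^w = B^w \eta^{-w}$ for $0 < \eta \le 1$, which is exactly the definition of an $L^2$-dense class with parameter $D = B^w$ and exponent $L = w$. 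Countability of $\Gc$ ensures that the supremum is measurable, so the probability on the left-hand side is well defined.

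For (ii), I would substitute $D = B^w$ and $L = w$ into Major's conclusion and read off the three ingredients of the statement: the tail prefactor, proportional to $D$, becomes $C B^w$; the exponential decay $\rme^{-\alpha y}$ attaches to the deviation level $2 n \nu y$; and the lower endpoint of the admissible range, which Major expresses through $\log D$, becomes $y_- = K[w + (w\log B/\log n)_+]^{3/2}\log(2/\nu)$ after writing $\log D = w\log B$, while the upper endpoint is $y_+ = n\nu^2$. A minor bookkeeping issue is the difference between the sum over ordered pairs $\sum_{1 \le i \ne j \le n}$ used here and the symmetrised or unordered-pair version Major may use; since $g$ is symmetric this costs only a factor of two, which is absorbed into the universal constants.

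The hard part is exactly this faithful translation. Major's theorem is phrased in terms of $L^2$-dense classes and canonical kernels, with its own normalisation and its own indexing of the several tail regimes of an order-two degenerate U-statistic, so care is needed to confirm that the clean exponential bound $C B^w \rme^{-\alpha y}$ at deviation $2 n \nu y$ is precisely the regime valid on $[y_-, y_+]$ and that the constants $\alpha$, $C$, $K$ can be taken universal, independent of $n$, $w$, $B$ and $\nu$. Once the parameter correspondence is pinned down, no further probabilistic argument is required.
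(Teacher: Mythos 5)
Your proposal is correct and takes essentially the same route as the paper: the paper provides no independent proof of this statement, presenting it exactly as the specialization of Theorem~2 in \citet{major2006estimate} to countable VC-classes via the dictionary you identify ($L^2$-dense parameter $D = B^w$, exponent $L = w$), which is precisely the translation the paper exploits in Step~3 of the proof of Theorem~\ref{thm:main} when it verifies that $\widetilde{\Gc}_n$ is $L^2$-dense. One small correction to a side remark: no symmetry of $g$ is needed for the factor of two, since Major's convention for canonical kernels, $I_{n,2}(g) = \tfrac{1}{2}\sum_{1 \le i \ne j \le n} g(X_i, X_j)$, holds for arbitrary (possibly non-symmetric) kernels, and this factor is what produces the $2 n \nu y$ deviation level rather than being absorbed into the universal constants.
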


\begin{ack}
The authors gratefully acknowledge comments and suggestions by an anonymous Reviewer that stimulated us to sharpen the main theorem and work out the application to Monte Carlo methods.
The authors also thank R\'emi Leluc for his valuable feedback and Aigerim Zhuman for her insightful remarks.
This project was supported financially by FNRS-F.R.S. grant CDR J.0146.19.
\end{ack}

\bibliographystyle{chicago}
\bibliography{biblio}

\end{document}